\theoremstyle{plain}
\newtheorem{theorem}{Theorem}[section]
\newtheorem{lemma}[theorem]{Lemma}
\newtheorem{corollary}[theorem]{Corollary}
\theoremstyle{definition}
\newtheorem{definition}[theorem]{Definition}
\theoremstyle{remark}
\newtheorem{remark}[theorem]{Remark}
\let\over=\@@over \let\overwithdelims=\@@overwithdelims
\let\atop=\@@atop \let\atopwithdelims=\@@atopwithdelims
\let\above=\@@above \let\abovewithdelims=\@@abovewithdelims
\tikzstyle{int}=[draw, fill=blue!20, minimum size=2em]
\tikzstyle{dot}=[circle, draw, fill=blue!20, minimum size=2em]
\tikzstyle{init} = [pin edge={to-,thin,black}]
\newcommand{\Regret}{\mathsf{Regret}}
\newcommand{\eqref}[1]{~(\ref{#1})}
\def\mod{\mathop{\rm mod}}
\newcommand{\argmin}{\mathop{\rm argmin}}
\newcommand{\argmax}{\mathop{\rm argmax}}
\def\EE{\Expect}
\def\bbordermatrix#1{\begingroup \m@th
	\@tempdima 4.75\p@
	\setbox\z@\vbox{%
		\def\cr{\crcr\noalign{\kern2\p@\global\let\cr\endline}}%
		\ialign{$##$\hfil\kern2\p@\kern\@tempdima&\thinspace\hfil$##$\hfil
			&&\quad\hfil$##$\hfil\crcr
			\omit\strut\hfil\crcr\noalign{\kern-\baselineskip}%
			#1\crcr\omit\strut\cr}}%
	\setbox\tw@\vbox{\unvcopy\z@\global\setbox\@ne\lastbox}%
	\setbox\tw@\hbox{\unhbox\@ne\unskip\global\setbox\@ne\lastbox}%
	\setbox\tw@\hbox{$\kern\wd\@ne\kern-\@tempdima\left[\kern-\wd\@ne
		\global\setbox\@ne\vbox{\box\@ne\kern2\p@}%
		\vcenter{\kern-\ht\@ne\unvbox\z@\kern-\baselineskip}\,\right]$}%
	\null\;\vbox{\kern\ht\@ne\box\tw@}\endgroup}
\newcommand{\Expect}{\mathbb{E}}
\newcommand{\indc}[1]{{\mathbf{1}_{\left\{{#1}\right\}}}}
\definecolor{myblue}{rgb}{.8, .8, 1}
\definecolor{mathblue}{rgb}{0.2472, 0.24, 0.6} 
\definecolor{mathred}{rgb}{0.6, 0.24, 0.442893}
\definecolor{mathyellow}{rgb}{0.6, 0.547014, 0.24}
\newcommand{\mmse}{\mathsf{mmse}}
\def\E{\mathbb{E}}
\def\unifto{\mathop{{\mskip 3mu plus 2mu minus 1mu%
			\setbox0=\hbox{$\mathchar"3221$}%
			\raise.6ex\copy0\kern-\wd0%
			\lower0.5ex\hbox{$\mathchar"3221$}}\mskip 3mu plus 2mu minus 1mu}}
\def\simleq{{{\mskip 3mu plus 2mu minus 1mu%
			\setbox0=\hbox{$\mathchar"013C$}%
			\raise.2ex\copy0\kern-\wd0%
			\lower0.9ex\hbox{$\mathchar"0218$}}\mskip 3mu plus 2mu minus 1mu}}
\def\simleq{\lesssim}
\def\simgeq{{{\mskip 3mu plus 2mu minus 1mu%
			\setbox0=\hbox{$\mathchar"013E$}%
			\raise.2ex\copy0\kern-\wd0%
			\lower0.9ex\hbox{$\mathchar"0218$}}\mskip 3mu plus 2mu minus 1mu}}
\def\simgeq{\gtrsim}
\newif\ifmapx
\edef\jobnametmp{\expandafter\string\csname ic_apx\endcsname}
\edef\jobnameapx{\expandafter\mkillslash\jobnametmp}
\edef\jobnameexpand{\jobname}
\title{Solving Empirical Bayes via Transformers}
    \author{Anzo Teh, Mark Jabbour, Yury Polyanskiy\thanks{
    M.J. was with the Department of EECS, MIT, Cambridge,
		MA, email: \url{mjabbour@mit.edu}. 
    A.T. and Y.P. are with the Department of EECS, MIT, Cambridge,
		MA, email: \url{anzoteh@mit.edu} and \url{yp@mit.edu}. Code available at \url{https://github.com/Anzoteh96/eb-transformers}}}
\begin{document}

\maketitle

\begin{abstract}
This work applies modern AI tools (transformers) to solving one of the oldest statistical problems: Poisson means under empirical Bayes (Poisson-EB) setting. In Poisson-EB, a high-dimensional mean vector $\theta$ (with i.i.d. coordinates sampled from an unknown prior $\pi$) is estimated on the basis of $X=\mathrm{Poisson}(\theta)$. A transformer model is pre-trained on a set of synthetically generated pairs $(X,\theta)$ and learns to do in-context learning (ICL) by adapting to unknown $\pi$. Theoretically, we show that a sufficiently wide transformer can achieve vanishing regret with respect to an oracle estimator that knows $\pi$ as the dimension grows to infinity. Practically, we discover that already very small models (100k parameters) are able to outperform the best classical algorithm (non-parametric maximum likelihood, or NPMLE) both in runtime and validation loss, which we compute on out-of-distribution synthetic data as well as real-world datasets (NHL hockey, MLB baseball, BookCorpusOpen). Finally, by using linear probes, we confirm that the transformer's EB estimator appears to internally work differently from either NPMLE or Robbins' estimators.
\end{abstract}

\tableofcontents

\section{Introduction}\label{sec:intro}
Transformers have received a lot of attention due to the prevalence of large language models (LLM). More generally, we think of (encoder-only) transformers as generic engines for learning from exchangeable data. Since most classical statistical tasks are formulated under the i.i.d. sampling assumption, it is very natural to try to apply transformers to them~\cite{garg2022can}. 

Training transformers for classical statistical problems serves two purposes. One is obviously to get better estimators. Another, equally important, goal of such exercises is to elucidate the internal workings of transformers in a domain with a much easier and much better understood statistical structure than NLP. 
In this work, we believe, we found the simplest possible such statistical task: \textit{empirical Bayes (EB) mean estimation}. 
We believe transformers are suitable for EB because EB estimators naturally exhibit a shrinkage effect (i.e. biasing mean estimates towards the nearest mode of the prior), and so do transformers, as shown in \cite{geshkovski2024emergence} that the attention mechanisms tend to cluster tokens. 
Additionally, the EB mean estimation problem is permutation equivariant, removing the need for positional encoding. 
In turn, estimators for this problem are in high demand 
\cite{koenker2024empirical, gu2023invidious, gu2022nonparametric} and unfortunately, the best classical estimator (so-called non-parametric maximum likelihood, or NPMLE) suffers from slow convergence. In this work, we demonstrate that transformers outperform NPMLE while also running almost 100x faster. 
We now proceed to defining the EB task.

\textit{Poisson-EB task:} One observes $n$ samples $X_1,\ldots,X_n$ which are generated iid via a two-step process. First, $\theta_1,\ldots,\theta_n$ are sampled from some unknown prior $\pi$ on $\mathbb{R}$. The $\pi$ serves as an unseen (non-parametric) latent variable, and we assume nothing about it (not even continuity or smoothness). Second, given $\theta_i$'s, we sample $X_i$'s conditionally iid via $X_i \sim \text{Poi}(\theta_i)$. The goal is to estimate $\theta_1, \cdots, \theta_n$ via $\hat{\theta}_1, \cdots, \hat{\theta}_n$ upon seeing $X_1, \cdots, X_n$ that minimizes the expected mean-squared error (MSE), $\mathbb{E}[(\hat{\theta}(X) - \theta)^2]$. 
If $\pi$ were known, the Bayes estimator that minimizes the MSE is the posterior mean of $\theta$, which also has the following form. 
\begin{equation}\label{eq:poisson-bayes}
\hat{\theta}_{\pi}(x) = \mathbb{E}[\theta | X = x] = (x + 1)\frac{f_{\pi}(x + 1)}{f_{\pi}(x)}\,.
\end{equation}
where $f_{\pi}(x)\triangleq \mathbb{E}_{\pi}[e^{-\theta}\frac{\theta^x}{x!}]$ is the posterior density of $x$. 
Given that $\pi$ is unknown, an estimator $\pi$ can only instead approximate $\hat{\theta}_{\pi}$. 
We quantify the quality of the estimation as the \emph{regret}, defined as the excess MSE of $\hat{\theta}$, over $\hat{\theta}_{\pi}$. 
\[
        \Regret(\hat{\theta}) = \E\left[\left(\hat{\theta}(X)-\theta\right)^2\right] - \E\left[\left(\theta_{\pi}(X)-\theta\right)^2\right] 
        = \E\left[\left(\hat{\theta}(X)-\theta\right)^2\right] - \mmse(\pi)
\]

In this Poisson-EB setting, multiple lines of work have produced estimators that resulted in regret that vanishes as sample size increases \cite{brown2013poisson, polyanskiy2021sharp, jana2022optimal, jana2023empirical}. 
Robbins estimator \cite{Rob51, Rob56} replaces the unknown posterior density $f_{\pi}$ in \prettyref{eq:poisson-bayes} with $N_n(\cdot)$, the empirical count among the samples $X_1, \cdots, X_n$. 
Minimum distance estimators first estimate a prior (e.g. the NPMLE estimator $\hat{\pi}_{\mathsf{NPMLE}} = \argmax_{Q} \prod_{i=1}^n f_Q(X_i)$), 
and then produces the plugged-in Bayes estimator $\hat{\theta}_{\hat{\pi}}$.  
Notice that Robbins estimator suffers from multiple shortcomings like numerical instability (c.f. \cite{efron2021computer}) and the lack of monotonicity property of the Bayes estimator $\hat{\theta}_{\pi}$ 
(c.f. \cite{houwelingen1983monotone}), 
while minimum-distance estimators are too computationally expensive and do not scale to higher dimensions. 
\cite{jana2023empirical} attempts to remedy the `regularity vs efficiency' tradeoffs in these estimators with an estimator based on score-estimation equivalent in the Poisson model. 
However, despite the monotone regularity added, this estimator still does not have a Bayesian form: a cost one pays to achieve an efficient computational time.

\textit{Solving Poisson-EB via transformers.} 
We formulate our procedure for solving Poisson-EB as follows: we generate synthetic data and train our transformers on them. Then, we freeze their weights and present new data to be estimated. 
Concretely, our contributions are as follows: 
\begin{enumerate}
    \item In \prettyref{sec:theory}, we show that transformers can approximate Robbins and the NPMLE via the universal approximation theorem. We also use linear probes to show that our pre-trained transformers appear to work differently from the two aforementioned classical EB estimators. 

    \item In \prettyref{sec:synthetic}, we set up synthetic experiments to demonstrate that  pre-trained transformers can generalize to unseen sequence lengths and evaluation priors. 
    This is akin to \cite{xie2021explanation} where ICL occurs at test time despite distribution mismatch. 
    In addition, we will show that a transformer of reduced complexity can perform at almost the same level. 

    \item In \prettyref{sec:real}, we evaluate these transformers on real datasets for a similar prediction task to demonstrate that they often outperform the classical baselines and crush them in terms of speed.
\end{enumerate}

One interesting finding is that our transformers demonstrate \textit{length-generalization}, whereas previously transformers' track record in this regard was rather mixed \cite{zhou2024transformers, wang2024length, kazemnejad2024impact, anil2022exploring}. Specifically,  our transformers achieve length-generalization by getting lower regret upon being tested on sequence lengths up to 4x the length they were trained on, even in zero-shot setting where the data comes from unseen type of prior. 

We mention that there is a long literature studying transformers for many statistical problems~\cite{bai2024transformers}, further reviewed below. What distinguishes our work is a) our estimator does not just match but improve upon existing (classical) estimators, thus advancing the statistical frontier; b) our setting is unsupervised and  closer to NLP compared to most previous work considering supervised learning (classification and regression), in which data comes in \textit{pairs}, thus requiring unnatural tricks to pair tokens; c) our problem is non-parametric.

In summary, we demonstrate that even for classical statistical problems, transformers offer an excellent alternative (in runtime and performance). 
For the simple 1D Poisson-EB task, we also found that already very economically sized transformers ($< 100$ k parameters) can have excellent performance. We hope that disecting the inner workings of these transformers will suggest novel types of statistical procedures for the large-scale statistical inference~\cite{efron2012large}.

\section{Related work}
\textbf{Transformers and in-context learning (ICL). }
Transformers have shown the ability to do ICL, as per the thread of work summarized in 
\cite{dong2022survey}. 
ICL is primarily manifested in natural language processing \cite{brown2020language, dai2022can}; 
several works also study the ability and limitations of transformers in using ICL in doing various statistical tasks 
\cite{akyurek2022learning, zhang2023trained, garg2022can, tarzanagh2023transformers, bhattamishra2023understanding, akyurek2022learning, bai2024transformers}. 
Recent works have also explained ICL from a Bayesian point of view 
\cite{muller2024bayes, panwar2023context}, including upon train-test distribution mismatch \cite{xie2021explanation}.

\textbf{How do transformers work?} 
\cite{yun2019transformers} has established the universal approximation theorem of transformers. 
This was later extended for sparse transformers \cite{yun2020n} and the ICL setting \cite{furuya2024transformers}. Its limitations are further discussed in \cite{nath2024transformers}. 
Transformers have also been shown to do other approximation tasks, like Turing machines 
\cite{wei2022statistically, perez2021attention}. 
From another perspective, 
\cite{alain2018understanding} introduces linear probes as a mechanism of understanding the internals of a neural network. 
Linear probe has also been applied in transformers to study its ability to perform NLP tasks \cite{tenney2019bert}, achieve second order convergence \cite{fu2024transformers}, learn various functions in-context \cite{guo2023transformers}, and improve in-context learning \cite{abbas2024enhancing}. 

\textbf{Empirical Bayes. }
Empirical Bayes was introduced in \cite{Rob56} based on the premise that estimation tasks on a sequence can yield lower risk by allowing estimation of each individual component to depend on the entire sequence, e.g. as demonstrated by the James-Stein estimator \cite{stein1956inadmissibility, james1961estimation}. 
In the Poisson model, known estimators for the mean estimation problem are based either on the Tweedie formula \cite{Rob56}, posterior density estimation \cite{kiefer1956consistency, lindsay1983geometry}, 
or ERM \cite{jana2023empirical}. These methods have been shown to yield minimax optimal regret for certain classes of priors 
\cite{brown2013poisson, polyanskiy2021sharp, jana2022optimal, jana2023empirical}. 
In practice, empirical Bayes is a powerful tool for large-scale inference \cite{efron2012large}, 
with applications including  downstream tasks like linear regression 
\cite{kim2024flexible, mukherjee2023mean}, estimating the number of missing species \cite{fisher1943relation}, large scale hypothesis testing \cite{efron2001empirical}, 
and the construction of biological sequencing frameworks 
\cite{hardcastle2010bayseq, leng2013ebseq}. 

\textbf{Comparison to Bayesian methods and SBI.} We compare our approach against simulation-based inference (SBI) \cite{cranmer2020frontier} in the fully Bayesian model. In both EB and Bayesian model, we desire to obtain posterior $p(\theta | X)$, which requires knowledge of a good prior $\pi$ on $\theta$. In SBI,  this is resolved via a method of amortized inference \cite{zammit2024neural}, where one trains a model on many $(\theta, x)$ pairs thus implicitly learning the prior (and often the forward model $p(X|\theta)$ as well). It is hoped that the learned prior matches the one during inference. 
We note that transformers have been applied in Bayesian settings \cite{hollmann2025accurate, gloeckler2024all, chang2024amortized}, but these applications require either labeled (supervised) data at inference time or simulation access to the prior $\pi$ itself. Unsurprisingly, prior misspecification causes serious problems~\cite[Appendix S2.8]{zammit2024neural} unless one uses some regularization
\cite{huang2023learning}. 

In contrast, in the EB setting, absolutely nothing about the prior is assumed at inference time, with the model instead trying to estimate $\mathbb{E}[\theta | X = x]$ directly from the data consisting of $X_i$'s only. We stress again, that in EB there are no labeled (no $\theta$'s!) data presented to the estimator at inference time. Thus, while our model is trained on synthetic examples sampled from a variety of priors, the prior at inference time is expected to deviate from those used at pre-training and the whole goal of our work is to improve the zero-shot generalization ability.
We will discuss more in \prettyref{app:bayesian}, using illustrations of real-life datasets where this comparison becomes more apparent.

\section{Preliminaries}\label{sec:task}
\subsection{Baselines description}\label{sec:baselines}
We outline some of the classical algorithms that we will be benchmarking against. 

\textbf{Non empirical Bayes baselines.} 
When nothing is known about the prior $\pi$, the minimax optimal estimator is the familiar maximum-likelihood (MLE) estimator
$\hat{\theta}_{\mathsf{MLE}}(x) = x$. However, when one restricts priors in some way, the minimax optimal estimator is not MLE, but rather a Bayes estimator for the \emph{worst-case} prior. In this work, we consider priors restricted to support $[0,50]$. The minimax optimal estimator for this case is referred to as the \emph{gold standard} (GS) estimator to signify its role as the ``best'' in the sense of classical (pre-EB) statistics.
\prettyref{app:worstprior} contains derivation of GS.

\textbf{Empirical Bayes baselines.}
We will use the following empirical Bayes estimators as introduced in \prettyref{sec:intro}: 
the Robbins estimator, NPMLE estimator, and the ERM-monotone estimator with the algorithm described in Lemma 1 of  \cite{jana2023empirical}.

    \subsection{Transformer architecture}\label{sec:architecture}
    Next, we describe our transformer architecture, 
    which closely mimics the standard transformer architecture in \cite{vaswani2017attention}. 
    Given the permutation invariance of the Bayes estimator, we do not use positional encoding or masking. 
    Thus effectively, it is a full-attention encoder-only transformer with one linear decoder on top. 
    In addition, we achieve parameter sharing by using \emph{two} different weights, split evenly across the $N$ layers. 
    The intuition behind it is that one learns the encoding part (input) and the other the decoding part (output). 

    \textbf{Linear attention}. 
    We considered the following linearized attention using some feature representation $\phi$: 
    \begin{equation}\label{eq:linear_attn_defn}
    \text{Attention}(Q, K, V) = \frac 1n \phi(Q)(\phi(K)^TV)
    \end{equation}
    We experimented with several choices of $\phi$ and found that the identity function $\phi(x)=x$ works well. 
    
    Linearized attention is formalized in \cite{katharopoulos2020transformers}, 
    and subsequently refined in frameworks like Mamba \cite{gu2023mamba, dao2024transformers}, 
    gated update rule \cite{yang2023gated}, and Delta rule 
    \cite{yang2024gated, yang2024parallelizing}.
    While we will show that the attention defined in \prettyref{eq:linear_attn_defn} performs almost as well in our EB setting,
    a rigorous comparison of different linear attention frameworks in solving EB will be subject to future work. 

    \subsection{Training protocol}\label{sec:training}
    \textbf{Data generation.}
    We emphasize that all our transformers are trained on synthetic data, using the Poisson-generated integers $X$ as inputs and the hidden parameters $\theta$ as labels.
    We train our transformers via the MSE loss $\sum_{i=1}^n (\hat{\theta}(X_i) - \theta_i)^2$ and the Adam optimizer \cite{kingma2014adam}. 

    There are two classes of priors from which we generate $\theta_{\text{base}}\in [0, 1]$: 
    the neural-generated prior-on-priors $\in\mathcal{P}([0, 1])$, and Dirichlet process with 
    base distribution $\mathsf{Unif}[0, 1]$ within each batch. These are further described in 
    \prettyref{app:train-priors}. 
    We then define $\theta = \theta_{\text{base}} \cdot \theta_{\max}$. 
    At evaluation stage, $\theta_{\max}$ is generally fixed (e.g. 50 in \prettyref{sec:synthetic}). 
    To ensure robustness of our transformers against inputs of different magnitude, however, at training stage we let $\theta_{\max}$ be random, sampled from 
    \[
    \frac 34\text{Unif}([0, 200]) + \frac 18\text{Exp}(50) + \frac 18\text{Cauchy}(50, 10)
    \]
    and capped at $\theta_{\max} \le 500$. 
    We defer the detailed discussion to \prettyref{app:train-priors}, including the motivation to train with a mixture of the two priors. 
    
    \textbf{Parameter selection.}
We consider models of 6, 12, 18, 24, and 48 layers, embedding dimension $\mathsf{dmodel}$ either 32 or 64, 
and number of heads in 4, 8, 16, 32. 
We fix the number of training epochs to 50k, the learning rate to 0.02, and the decay rate every 300 epochs to 0.9. We train our transformers on sequence length 512, and for each epoch the we use 192 batches. 
Among the trained models, we chose our models based on the mean-squared error evaluated on neural prior-on-prior and Dirichlet process during inference time. We eventually arrive at our model T24r, 
with 24 layers, embedding dimension 32, 8 heads. 
We will make comparison against L24r, which is its linear attention counterpart as defined in \prettyref{eq:linear_attn_defn}, and with the identity feature representation $\phi(x)=x$. 

\section{Understanding transformers}\label{sec:theory}
In this section, we try to gain an intuition on how transformers work in the empirical Bayes setting. 
We achieve this from two angles. First, we establish some theoretical results on the expressibility of transformers in solving empirical Bayes tasks. 
Second, we use linear probes to study the prediction mechanism of the transformers. 

\subsection{Expressibility of transformers}\label{sec:theorems}
We discuss the feasibility of using transformers to solve the empirical Bayes prediction task. Universal learnability of transformers is first established in \cite{yun2019transformers}, 
and further characterized in \cite{furuya2024transformers}. 

To start with, we consider the clipped Robbins estimator, defined as follows: 
\begin{equation}
    \hat{\theta}_{\mathsf{Rob}, d, M}(x) = 
    \begin{cases}
        \min \{(x + 1)\frac{N(x + 1)}{N(x)}, M\} & x < d\\
        M & x\ge d
    \end{cases}
\end{equation}
Here, we show that transformers can learn this clipped Robbins estimator up to an arbitrary precision. 

\begin{theorem}\label{thm:robbins-transformers}
    Set a positive integer $d$ and a positive real number $M$. 
    Then for any $\epsilon > 0$, there exists a transformer  
    that learns the clipped Robbins estimator 
    $\hat{\theta}_{\mathsf{Rob}, d, M}$ up to a precision $\epsilon$, 
    In addition, this transformer has embedding dimension $d+1$ at encoding stage, 
    and a two-layered feedforward network with $O((1+M)^2\epsilon^{-1})$ hidden neurons at the hidden layer at decoding stage. 
\end{theorem}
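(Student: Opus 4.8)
The plan is to construct the transformer explicitly in two stages matching the paper's encoder/decoder split. The clipped Robbins estimator $\hat\theta_{\mathsf{Rob},d,M}(x)$ depends on the input token $x$ only through the empirical counts $N(0),\dots,N(d)$, so the encoder's job is to compute, at each position $i$, the one-hot-like vector recording $X_i$ together with the normalized counts $\frac1n N(j)$ for $j=0,\dots,d$; the decoder's job is then to implement the scalar map $x\mapsto \min\{(x+1)N(x+1)/N(x),\,M\}$ (with value $M$ for $x\ge d$ or when $N(x)=0$) to the desired precision.

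First I would handle the encoder. Map each scalar input $X_i$ to the embedding $e(X_i)\in\reals^{d+1}$ which is the indicator vector $(\indc{X_i=0},\dots,\indc{X_i=d-1},\indc{X_i\ge d})$ — a fixed lookup that a single feedforward block with threshold-type activations can realize exactly, or approximate to arbitrary precision with a bounded two-layer ReLU network since $X_i$ ranges over a discrete set. One application of (unmasked, positionless) self-attention with identity-like value map then averages these embeddings over the sequence, producing at every position the vector $\big(\tfrac1nN(0),\dots,\tfrac1nN(d-1),\tfrac1nN(\{\ge d\})\big)$; concatenating (via a residual/skip path) with the retained one-hot of $X_i$ gives a width-$(d+1)$ (or $2(d+1)$, collapsible to $d+1$ by standard padding arguments) representation from which $\hat\theta_{\mathsf{Rob},d,M}(X_i)$ is a deterministic function. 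This is where I would invoke the universal-approximation machinery of \cite{yun2019transformers, furuya2024transformers} rather than re-deriving attention approximation lemmas from scratch.

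Second, the decoder. Conditioned on $X_i<d$, the target is $g(a,b)=\min\{(X_i+1)\,b/a,\,M\}$ evaluated at $a=\tfrac1nN(X_i)$, $b=\tfrac1nN(X_i+1)$. The only subtlety is that $b/a$ is unbounded near $a=0$, but the clip at $M$ makes the relevant region $\{(a,b): (X_i+1)b \le Ma,\ 0\le b\le 1\}\cup\{a=0\}$, on which the function to be approximated is bounded by $M$ and piecewise smooth; moreover $a,b$ are multiples of $1/n$, so when $a=0$ we are automatically in the "output $M$" branch. A two-layer feedforward network approximates $\min\{t,M\}$ on $t\in[0, (d+1)]$ — wait, more carefully, the image $(X_i+1)b/a$ before clipping can be as large as $(d)\cdot 1 \cdot n$, so one instead approximates the clipped ratio directly: partition $[0,M]$ into $O((1+M)^2/\epsilon)$ cells and use ReLU bumps, handling the ratio via the identity $\min\{cb/a,M\}$ and a standard division-gadget bound; the count $O((1+M)^2\epsilon^{-1})$ of hidden neurons is exactly what falls out of such a piecewise-linear interpolation of a Lipschitz-after-clipping function of two bounded arguments. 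Combining the two stages and propagating the $\epsilon$ through the (Lipschitz) composition yields a transformer whose output at every position is within $\epsilon$ of $\hat\theta_{\mathsf{Rob},d,M}(X_i)$.

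The main obstacle I expect is the division $N(x+1)/N(x)$: one must argue that the unboundedness of this ratio is benign because the clip at $M$ confines attention to a region where a bounded-width network suffices, and one must be careful that the feedforward approximation of the ratio is uniform over all $n$ despite $a$ taking values arbitrarily close to $0$ — the resolution being that $a\in\{0,\tfrac1n,\tfrac2n,\dots\}$ and $a=0$ deterministically triggers the $M$ branch, while for $a\ge\tfrac1n$ the quantity $b/a$, though large, is still capped post-clip, so the network only ever needs to output values in $[0,M]$ and can be built by interpolating the clipped function on a grid of mesh $\Theta(\epsilon/(1+M))$ in each coordinate, giving the stated $O((1+M)^2\epsilon^{-1})$ neuron count. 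Everything else — the lookup embeddings, the single averaging attention layer, the residual concatenation — is routine given the cited universal approximation results.
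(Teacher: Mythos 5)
Your overall architecture (one-hot encoder, a single attention layer to aggregate counts, a feedforward decoder implementing the clipped ratio) matches the paper's, and the encoder half is fine. The genuine gap is in the decoder. You hand the feedforward network the pair $\bigl(a,b\bigr)=\bigl(\tfrac1n N(X_i),\tfrac1n N(X_i+1)\bigr)$ and ask it to approximate $h(a,b)=\min\{(x+1)b/a,\,M\}$, arguing that the clip at $M$ makes a grid interpolation of mesh $\Theta(\epsilon/(1+M))$ suffice. That argument fails: clipping bounds the \emph{range} of $h$ but not its modulus of continuity. Near the origin, $\partial h/\partial b = (x+1)/a$ wherever the clip is inactive, so for $a=1/n$ the function jumps from $0$ at $b=0$ to $\min\{x+1,M\}$ at $b=1/n$; the relevant Lipschitz constant grows like $n$, and $h$ does not even extend continuously to $a=0$. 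Since the theorem requires a single transformer (of size independent of $n$) and \prettyref{cor:vanish-regret} uses it for all $n\ge N$, no fixed-width two-layer network can uniformly approximate this two-variable target, and the $O((1+M)^2\epsilon^{-1})$ neuron count does not "fall out" of your construction. Your observation that $a\in\{0,\tfrac1n,\dots\}$ with $a=0$ triggering the $M$ branch does not rescue this, because the problematic regime is $a=\tfrac1n>0$ with $b$ small, which occurs for every $n$.

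The paper avoids the division entirely by exploiting the normalization built into softmax: with keys $W_K$ carrying a large constant $D$ on the diagonal and $D+\sqrt{d+1}\log(x+1)$ on the superdiagonal, the attention weights at position $i$ become $\frac{1}{N(X_i)+(X_i+1)N(X_i+1)}$ on tokens equal to $X_i$ and $\frac{X_i+1}{N(X_i)+(X_i+1)N(X_i+1)}$ on tokens equal to $X_i+1$ (the $\log$ in the exponent producing the multiplicative factor $X_i+1$), so the encoder outputs the \emph{single bounded scalar} $Z'_i=\frac{N(X_i)}{N(X_i)+(X_i+1)N(X_i+1)}\in[\tfrac{1}{1+(X_i+1)n},1]$. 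The decoder then only has to approximate the one-variable function $f(z)=\tfrac1z-1$ for $z\ge\tfrac{1}{1+M}$ and $f(z)=M$ otherwise, which \emph{is} globally $(1+M)^2$-Lipschitz on $[0,1]$; the stated neuron count then follows from a standard one-dimensional approximation bound. To repair your proof you would need to replace your averaging attention and two-variable decoder with some such pre-normalization step (computing a quantity like $\tfrac{N(x)}{N(x)+(x+1)N(x+1)}$ or $\tfrac{b}{a+b}$ inside the network before the feedforward approximation), which is precisely the non-routine idea the theorem turns on.
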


We also show that transformers can approximate NPMLE up to an arbitrary input value and precision, although we will omit quantitative bounds on this transformer due to challenges stated in \cite[Remark 3]{furuya2024transformers}. 

\begin{theorem}\label{thm:univ_npmle}
    Let $M > 0$, and denote the NPMLE estimator $\hat{\theta}_{\mathsf{NPMLE}, M}$, the NPMLE estimator chosen among 
    $\mathcal{P}([0, M])$. For each integer $d > 0$, consider the following modified NPMLE function: 
    \[
    \theta_{\mathsf{NPMLE}, d, M}(x)
    =
    \begin{cases}
        \hat{\theta}_{\mathsf{NPMLE}}(x) & x \le d\\
        M & x > d\\
    \end{cases}
    \]
    then for any $\epsilon > 0$ there exists a transformer that can approximate $\theta_{\mathsf{NPMLE}, d}$ uniformly up to $\epsilon$-precision. 
\end{theorem}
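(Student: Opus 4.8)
The plan is to reduce the statement to the universal approximation theorem for transformers \cite{yun2019transformers, furuya2024transformers} in essentially the same way as \prettyref{thm:robbins-transformers}, the only genuinely new ingredient being that the NPMLE plug-in estimator depends on the data through a \emph{continuous} function of the empirical distribution of the inputs. Throughout, let $N(\cdot)$ be the histogram of $X_1,\dots,X_n$ and $\hat f_n=N(\cdot)/n$ the empirical marginal; as in \prettyref{thm:robbins-transformers} I would one-hot encode each $X_i$, absorbing all values $>d$ into a single ``overflow'' coordinate, so that $\hat f_n$ takes values in a compact simplex $\Delta_d$ and the non-constant part of the target concerns only $x\in\{0,1,\dots,d\}$, since $\theta_{\mathsf{NPMLE},d,M}\equiv M$ on $\{x>d\}$.

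First I would show that the NPMLE \emph{fitted marginal} is a continuous function of $\hat f_n$. The map $Q\mapsto f_Q(x)=\E_Q[e^{-\theta}\theta^x/x!]$ is affine and weak-$*$ continuous on $\calP([0,M])$, which is weak-$*$ compact and metrizable; hence $(Q,h)\mapsto\sum_x h(x)\log f_Q(x)$ is concave in $Q$, jointly upper semicontinuous, and finite near any maximizer (taking $Q=\delta_m$, $m\in(0,M]$, shows the supremum exceeds $-\infty$). So for each $h$ the maximizer set $\hat\Pi(h)\subset\calP([0,M])$ is nonempty, convex and compact, and the fitted marginal $f_{\hat\pi}$ is the same for every $\hat\pi\in\hat\Pi(h)$ (two maximizers with distinct fitted marginals would be strictly improved upon by their midpoint, by strict concavity of $\log$). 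The one-sided Berge maximum theorem gives upper hemicontinuity of $h\mapsto\hat\Pi(h)$, and combining this with compactness and single-valuedness of the fitted marginal, a subsequence argument shows $h\mapsto f_{\hat\pi}$ is continuous on $\Delta_d$. Therefore $\hat\theta_{\mathsf{NPMLE}}(x)=(x+1)\,f_{\hat\pi}(x+1)/f_{\hat\pi}(x)$ is continuous on the set of histograms on which $f_{\hat\pi}$ stays bounded away from $0$ on $\{0,\dots,d+1\}$ (e.g.\ as soon as $\hat f_n$ charges two distinct values, which we may assume, handling the degenerate boundary by a routine clipping); after the clip to $[0,M]$ this yields a continuous map $F:\Delta_d\times\{0,\dots,d\}\to[0,M]$ with $F(\hat f_n,x)=\theta_{\mathsf{NPMLE},d,M}(x)$.

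Then I would realize $F$ with a transformer exactly as in the proof of \prettyref{thm:robbins-transformers}: the encoding layers use a single uniform-weight attention head to write $\hat f_n\in\Delta_d$ into the residual stream at every position while retaining that position's own one-hot vector, and the decoding layers apply a feedforward network which, by the universal approximation theorem for feedforward networks, approximates the continuous $F$ on its compact domain to within $\epsilon$ uniformly; that the whole pipeline is an encoder-only, positional-encoding-free transformer of the stated kind is the content of \cite{yun2019transformers, furuya2024transformers}.

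I expect the first step — continuity of the NPMLE fitted marginal in the empirical distribution, together with keeping it bounded away from zero so the Tweedie-type ratio is well defined — to be the main obstacle; it is also the reason no quantitative bound on the decoder width is given, since the modulus of continuity of the NPMLE map is not available in closed form, matching the difficulty noted in \cite[Remark 3]{furuya2024transformers}.
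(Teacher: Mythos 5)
Your overall strategy---establish continuity of the NPMLE plug-in estimator as a function of the empirical distribution, then invoke universal approximation---is the same as the paper's. But there is a genuine gap in your reduction step. You collapse all observed values $>d$ into a single ``overflow'' coordinate and claim the target restricted to $x\in\{0,\dots,d\}$ is a function on $\Delta_d\times\{0,\dots,d\}$. It is not: $\hat{\theta}_{\mathsf{NPMLE}}(x)$ for $x\le d$ is the Tweedie ratio of the marginal fitted by maximizing $\prod_{i=1}^n f_Q(X_i)$ over the \emph{entire} sample, so it depends on how the mass above $d$ is distributed, not merely on its total. Two datasets agreeing on the truncated histogram but with overflow concentrated at $d+1$ versus at a much larger value yield different NPMLE priors and hence different values of $\hat{\theta}_{\mathsf{NPMLE}}(x)$ at small $x$; no decoder reading only $(\hat f_n|_{\Delta_d},x)$ can approximate such a target uniformly. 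This is exactly why the paper embeds tokens via $Y_i=\mathsf{Sigmoid}(X_i)$: the sigmoid injectively compactifies $\mathbb{Z}_{\ge0}$ into $[0,1]$, the full empirical distribution is retained as a measure on the compact set $\mathrm{Sig}^+$, and continuity is then proved in the weak$^*$ topology on $\mathcal{P}(\mathrm{Sig}^+)$ (Lemma~\ref{lmm:npmle_cont}) before applying \cite[Theorem 1]{furuya2024transformers}. Your argument would need either this kind of lossless compactification or a separate estimate showing the overflow distribution's influence on the fit is uniformly $O(\epsilon)$, which is false without further restrictions on the input.

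A secondary issue: your midpoint argument shows the fitted marginal $f_{\hat\pi}(x)$ is unique across maximizers only at points $x$ with $h(x)>0$. The Tweedie ratio requires $f_{\hat\pi}(x)$ and $f_{\hat\pi}(x+1)$ at all $x\le d$, including unobserved ones, so without more the function $F$ you feed to the universal approximator is not even well defined. The paper resolves this by citing uniqueness of the NPMLE prior itself (Theorem 1 of \cite{jana2022optimal}) and then runs a compactness/contradiction argument in the KL-divergence formulation to get convergence of $f_{\hat\pi_n}(x_0)$ even when $p_0(x_0)=0$; your ``routine clipping'' remark does not substitute for this. The Berge-maximum-theorem route is otherwise a reasonable and somewhat cleaner alternative to the paper's hands-on sequential argument, but both of the above points need to be repaired before it goes through.
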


Full proofs are deferred to \prettyref{app:proofs}, and we only give a sketch for now. 
For Robbins approximation, we use softmax attention to create an encoding mechanism that encodes $\frac{N(X_i)}{N(X_i)+(X_i + 1)N(X_i + 1)}$ at position $i$ among $1, \cdots, n$ and use a decoder to approximate the function $x\to \min\{\frac{1}{x} - 1, M\}$. For NPMLE approximation, we pass in the Sigmoid of the integer inputs as embedding, 
and show that $\hat{\theta}_{\mathsf{NPMLE}}$ can be continuously extended, with sigmoid-transformed empirical distribution as arguments. Then by \cite[Theorem 1]{furuya2024transformers}, there exists a transformer that can learn NPMLE to an arbitrary precision. For the encoding part, we provide a pseudocode in \prettyref{app:transf-robbins} that closely follows PyTorch's implementation. 

To illustrate the significance of both of these theorems, we demonstrate that transformers can learn an empirical Bayes prediction task to an arbitrarily low regret, 
although the quantitative bounds for the transformer size only apply for Robbins estimation. 

\begin{corollary}\label{cor:vanish-regret}
    Let $\theta_{\max} > 0$ be fixed. 
    For any $\epsilon > 0$, there exists an integer $N = O_{\theta_{\max}}(\epsilon^{-1}\log^2(\epsilon^{-1})))$ and a transformer $\Gamma$ with $O_{\theta_{\max}}(\epsilon^{-1})$ parameters such that for all $n\ge N$, 
    the minimax regret of $\Gamma(X_1, \cdots, X_n)$ on prior $\pi\in\mathcal{P}([0, \theta_{\max}])$ satisfies 
    \[
    \sup_{\pi\in\mathcal{P}([0, \theta_{\max}])}\mathsf{Regret}(\Gamma(X_1, \cdots, X_n))\le \epsilon
    \]
\end{corollary}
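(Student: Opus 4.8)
The plan is to take for $\Gamma$ the transformer approximating the clipped Robbins estimator $\hat{\theta}_{\mathsf{Rob},d,M}$ furnished by \prettyref{thm:robbins-transformers}, with $M=\theta_{\max}$ and with $d$ and the approximation precision $\epsilon'$ to be fixed, and to bound $\Regret(\Gamma)$ by a bias/variance-style split: an \emph{approximation} error measuring how well $\Gamma$ mimics clipped Robbins, plus a \emph{statistical} error equal to the regret of clipped Robbins itself. The underlying identity is that, since $(\theta_1,X_1)$ is independent of $(X_2,\dots,X_n)$ and $\hat{\theta}_{\pi}$ is the posterior mean, every permutation-equivariant estimator $\hat{\theta}$ satisfies $\Regret(\hat{\theta})=\E\bigl[(\hat{\theta}(X)_1-\hat{\theta}_{\pi}(X_1))^2\bigr]$. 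Because both $\hat{\theta}_{\mathsf{Rob},d,M}$ and $\hat{\theta}_{\pi}$ take values in $[0,M]$ (the latter being the posterior mean of a variable in $[0,\theta_{\max}]\subseteq[0,M]$), expanding the square and using $\linf{\Gamma-\hat{\theta}_{\mathsf{Rob},d,M}}\le\epsilon'$ gives
\[
\Regret(\Gamma)\;\le\;\Regret\bigl(\hat{\theta}_{\mathsf{Rob},d,M}\bigr)+2M\epsilon'+(\epsilon')^2\;=\;\Regret\bigl(\hat{\theta}_{\mathsf{Rob},d,M}\bigr)+O_{\theta_{\max}}(\epsilon')\,.
\]
It then suffices to make each summand at most $\epsilon/2$.

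For the approximation summand I would invoke \prettyref{thm:robbins-transformers} with $\epsilon'=c\,\epsilon/\theta_{\max}$ for a small absolute constant $c$, which forces $2M\epsilon'+(\epsilon')^2\le\epsilon/2$ and yields a transformer of embedding dimension $d+1$ whose decoder feed-forward layer has $O\bigl((1+M)^2(\epsilon')^{-1}\bigr)=O_{\theta_{\max}}(\epsilon^{-1})$ hidden units. Counting parameters with $d=O_{\theta_{\max}}(\log(1/\epsilon))$ (fixed in the next step): the constantly many attention/embedding blocks contribute $O(d^2)=O_{\theta_{\max}}(\log^2(1/\epsilon))$ parameters and the decoder contributes $O_{\theta_{\max}}(\epsilon^{-1})$ parameters, for a total of $O_{\theta_{\max}}(\epsilon^{-1})$, as claimed. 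Crucially, one and the same $\Gamma$ achieves the $\epsilon'$-approximation of $\hat{\theta}_{\mathsf{Rob},d,M}$ for every sequence length $n$.

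For the statistical summand I would split $\E\bigl[(\hat{\theta}_{\mathsf{Rob},d,M}(X)_1-\hat{\theta}_{\pi}(X_1))^2\bigr]$ according to $\{x\ge d\}$ and $\{x<d\}$. On $\{x\ge d\}$ the estimator returns $M$ and $\hat{\theta}_{\pi}(x)\in[0,M]$, so the contribution is at most $M^2\Prob[X\ge d]$; since $X\sim f_{\pi}$ is a Poisson mixture with mean $\le\theta_{\max}$, a Chernoff bound makes $M^2\Prob[X\ge d]\le\epsilon/4$ for a suitable $d=O_{\theta_{\max}}(\log(1/\epsilon))$. On $\{x<d\}$, clipping at $M\ge\theta_{\max}\ge\hat{\theta}_{\pi}(x)$ only moves the output toward $\hat{\theta}_{\pi}$, so it is enough to bound the regret-on-small-counts of the raw Robbins ratio; by the classical analysis of regularized Robbins-type estimators for compactly supported priors \cite{brown2013poisson, polyanskiy2021sharp}, this is $\lesssim_{\theta_{\max}}(\log n)^2/n$, uniformly over $\pi\in\mathcal{P}([0,\theta_{\max}])$. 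Choosing $N=C_{\theta_{\max}}\,\epsilon^{-1}\log^2(1/\epsilon)$ with $C_{\theta_{\max}}$ large makes this term $\le\epsilon/4$ for all $n\ge N$ (the envelope $(\log n)^2/n$ is decreasing beyond $N$), so $\Regret(\hat{\theta}_{\mathsf{Rob},d,M})\le\epsilon/2$, whence $\Regret(\Gamma)\le\epsilon$; as nothing in the bound depends on $\pi$, taking the supremum over $\pi\in\mathcal{P}([0,\theta_{\max}])$ completes the proof. (A non-quantitative alternative is to replace \prettyref{thm:robbins-transformers} by \prettyref{thm:univ_npmle} and the minimax regret of the NPMLE, giving existence of such a $\Gamma$ but no explicit size or sample bounds.)

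The main obstacle is the statistical summand: one must verify that the $O_{\theta_{\max}}\bigl((\log n)^2/n\bigr)$ regret of regularized Robbins holds uniformly over all $\pi\in\mathcal{P}([0,\theta_{\max}])$ \emph{and} survives letting the truncation index $d$ --- hence both the number of ``cases'' defining $\hat{\theta}_{\mathsf{Rob},d,M}$ and the range of inputs that actually matter --- grow like $\log(1/\epsilon)$. The subtle pieces are the values $x$ with $N(x)$ small or zero, which one controls via $\Prob[N(x)=0]=(1-f_{\pi}(x))^n$ together with a bias/variance trade-off summed over the $O(\log(1/\epsilon))$ relevant values of $x$; and checking that the constants in the cited regret bound depend only on $\theta_{\max}$, which is clean here because $M=\theta_{\max}$ is $\epsilon$-free and $d$ enters only through exponentially small Poisson tails. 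A secondary, routine point is confirming that the approximation guarantee of \prettyref{thm:robbins-transformers} is genuinely uniform in $n$, so that a single $\Gamma$ serves all $n\ge N$.
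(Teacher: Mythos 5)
Your proposal is correct and follows essentially the same route as the paper's proof: approximate the clipped Robbins estimator $\hat{\theta}_{\mathsf{Rob},d,M}$ with $M=\theta_{\max}$ via \prettyref{thm:robbins-transformers}, control the tail $\{X>d\}$ with a Poisson concentration bound giving $d=O_{\theta_{\max}}(\log(1/\epsilon))$, invoke the known $O_{\theta_{\max}}((\log n)^2/n)$ minimax regret of Robbins to set $N$, and count parameters as encoder $O(d^2)$ plus decoder $O_{\theta_{\max}}(\epsilon^{-1})$. The only cosmetic difference is your decomposition $\Regret(\Gamma)\le\Regret(\hat{\theta}_{\mathsf{Rob},d,M})+2M\epsilon'+(\epsilon')^2$ via the cross-term expansion, versus the paper's bound $\Regret(\Gamma)\le 2(\Regret(\hat{\theta})+\E[(\hat{\theta}-\Gamma)^2])$ with approximation precision $\sqrt{\epsilon/6}$; both land on the same $O_{\theta_{\max}}(\epsilon^{-1})$ parameter count.
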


\subsection{How do transformers learn?}
We study the mechanisms by which transformers learn via linear probe \cite{alain2018understanding}. 
To this end, we take the representation of each layer of our pretrained transformers, and train a decoder that comprises a layer normalization operation, a linear layer, and GeLU activation. 
This decoder is then trained with the following labels: frequency $N(x)$ within a sequence, 
and posterior density $f_{\hat{\pi}}(x)$ estimated by the NPMLE. 
The aim is to study whether our transformers function like the Robbins or NPMLE. 
In the plot in \prettyref{fig:linear_probe}, we answer this as negative, showing that our transformers are not merely learning about these features, but instead learning what the Bayes estimator $\hat{\theta}_{\pi}$ is.

\begin{figure}[ht]
\vskip 0.2in
\begin{center}
\begin{subfigure}[b]{0.3\linewidth}
    \includegraphics[width=\linewidth]{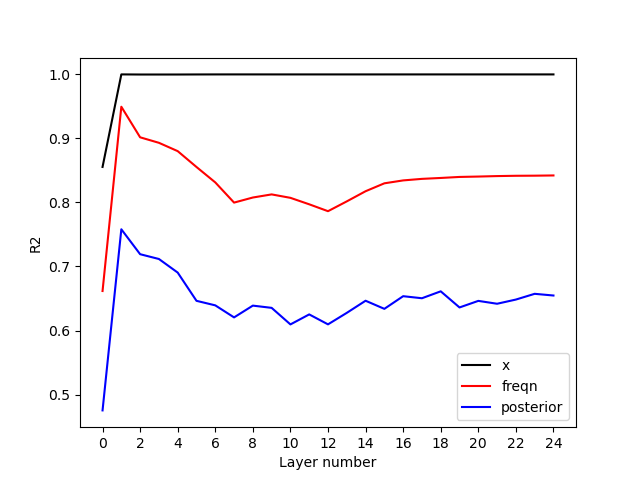}
    \caption{Neural prior}
\end{subfigure}
\begin{subfigure}[b]{0.3\linewidth}
    \includegraphics[width=\linewidth]{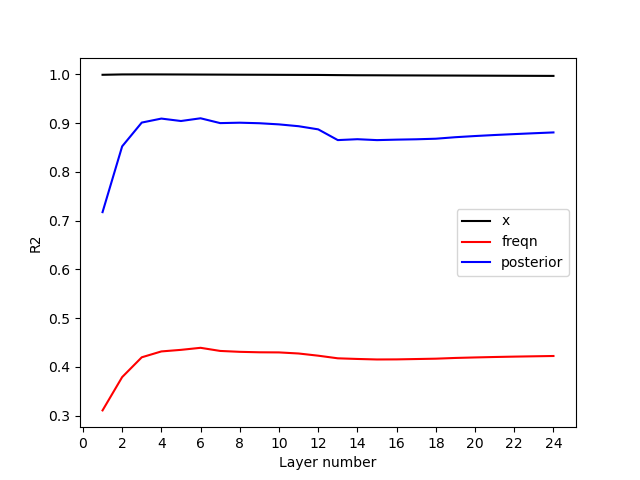}
    \caption{Multinomial prior}
\end{subfigure}
\begin{subfigure}[b]{0.3\linewidth}
    \includegraphics[width=\linewidth]{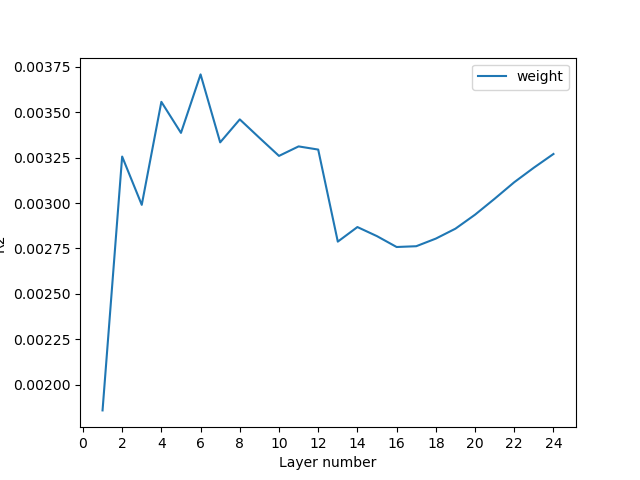}
    \caption{Multinomial prior: \text{PMF}($\theta$)}
\end{subfigure}
\caption{
\textbf{(a), (b):} $R^2$ score of linear probe result against $N(x), f_{\hat{\pi}}(x)$ and $x$ for T24r. We see that while $x$ itself is easily recoverable from any layer, ``knowledge'' about the former two quantities appears to either decrease (in (a)) or plateau (in (b)) with depth. \textbf{(c)} In the multinomial prior case, T24r does not seem to use any information on the atom weight $\text{PMF}_{\pi}(\theta)$. 
}
    \label{fig:linear_probe}
\end{center} 
\end{figure}

\section{Synthetic experiments}\label{sec:synthetic}
We now evaluate our trained transformers on their ability to generalize to sequence lengths and priors unseen during the training stage. 
We also compare against the classical algorithms introduced in \prettyref{sec:task} to demonstrate the superiority of these transformers by showing the average regret; 
most other details are deferred to \prettyref{app:synthetic}. 
We also investigate the inference time to show its advantage over the classical algorithms. 

\subsection{Ability to generalize}\label{sec:generalize}
Our transformer T24r is trained on a mixture of neural and Dirichlet priors, with randomized $\theta_{\max}$, 
and of sequence length 512. We will now describe the aspects we evaluate our trained transformer T24r on, and the corresponding experiments to do so. 
We will also evaluate L24r to investigate the efficacy of linear attention transformers. 

\textbf{Adaptability to various sequence lengths.} 
We evaluate the ability of transformers to adapt to different sequence lengths, 
both fewer than and more than what is trained. 
For each of the experiments, we evaluate our transformers and the baselines on sequence lengths 128, 256, 512, 1024, and 2048. 

\textbf{Robustness against unseen priors}. 
We evaluate our transformers on the following two priors unseen during training: 
(a) worst case prior in $\mathcal{P}([0,50])$ as mentioned in \prettyref{sec:baselines} and further explained in \prettyref{app:worstprior}; 
(b) multinomial prior-on-priors supported on $[0, 50]$ with fixed, evenly split grids and weights following Dirichlet distribution. 
For the worst case prior, the numbers of batches we evaluate on for this prior are 786k (for sequence lengths $n = 128, 256, 512$), 393k (for $n=1024$), and 197k (for $n = 2048$).
For multinomial prior-on-priors, we evaluate on 4096 priors, with the number of batches for each prior set as 4000 ($n = 128, 256, 512$) and 2000 ($n = 1024, 2048$), 
except for NPMLE where we use 192 batches for each prior given the significant computational resource needed. 

\textbf{Robustness under unknown $\theta_{\max}$.} 
We investigate the following: how much can we gain with perfect information on $\theta_{\max}$ during training?
Here, we evaluate our transformer T24r on 4096 neural-generated prior-on-priors with $\theta_{\max} = 50$, 
and compare with another transformer T24f with the same parameters, 
with training prior still neural-Dirichlet mixture (as described in \prettyref{sec:training}) but fixed $\theta_{\max} = 50$. 
The numbers of batches we evaluate on for each of the 4096 priors are 4000 (for sequence lengths $n = 128, 256, 512$) and 2000 (for sequence lengths $n = 1024, 2048$), 
except for NPMLE where we use 192 batches per prior. 
In the plots in \prettyref{fig:seqlen_all} we see that removing information on $\theta_{\max}$ (T24r vs T24f) during training incurs around 40\% extra regret. 

\textbf{Analysis of results}. 
We show our results in \prettyref{fig:seqlen_all}, 
which compares NPMLE, T24r, L24r, and T24f (when applicable). 
Note that we also run against other baselines (ERM-monotone, GS, MLE, Robbins), but their regrets are too big to be shown in the plots. 
There are situations where L24r outperforms T24r, showing that linear transformers perform almost as well. 
However, T24r appears to show a better length generalization property. 

\begin{figure}
    \begin{subfigure}[b]{0.32\linewidth}
        \centering
        \includegraphics[width=\linewidth]{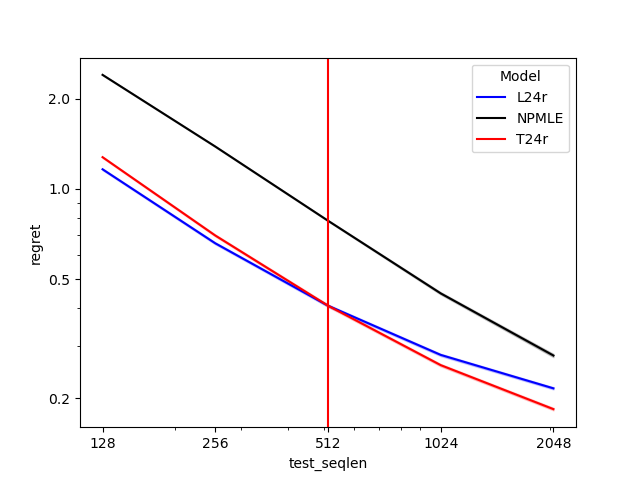}
        \caption{Worst case prior}
        \label{fig:worstprior_seqlen}
    \end{subfigure}
    \begin{subfigure}[b]{0.32\linewidth}
        \centering
        \includegraphics[width=\linewidth]{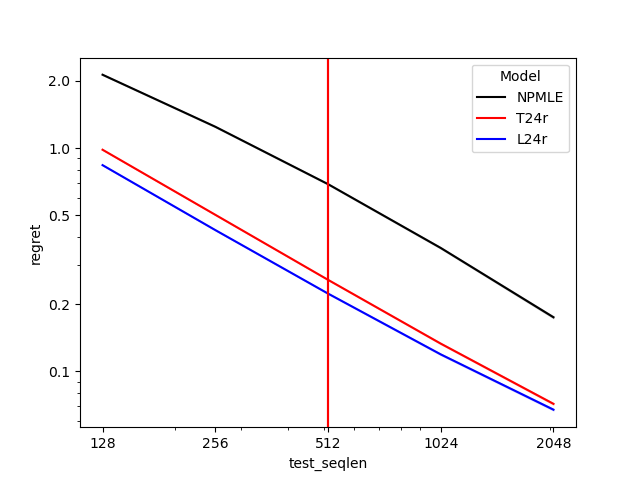}
        \caption{Multinomial prior-on-priors}
        \label{fig:multn_seqlen}
    \end{subfigure}
    \begin{subfigure}[b]{0.32\linewidth}
        \centering
        \includegraphics[width=\linewidth]{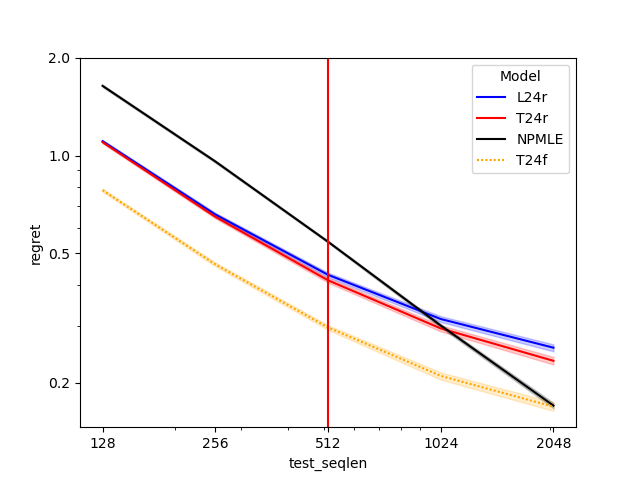}
        \label{fig:regret_seqlen}
        \caption{Neural prior-on-priors}
    \end{subfigure}
\caption{Average regret of NPMLE vs transformers on various priors in $\mathcal{P}([0, 50])$ and on various sequence lengths. The regrets of T24r and L24r decrease with sequence length on all priors and beats NPMLE on most instances. Red vertical lines at 512 denotes the sequence length that the transformers are trained on. 
\textbf{(a)}. T24r outperforms L24r at longer sequence length. ERM-monotone's, MLE's, and Robbin's regrets are 3.26, 11.73, and 85.68 even at sequence length 2048. 
\textbf{(b)}. L24r outperforms T24r throughout, although the gap narrows at $n=2048$. ERM-monotone's, MLE's, GS's, and Robbin's regrets are 2.57, 5.87, 5.63, and 64.43 even at sequence length 2048. 
\textbf{(c)}. NPMLE generalizes better at longer sequence lengths. At $n=4096$ (not shown) NPMLE beats the best performing T24f (regret 0.104 vs 0.153).  
T24r has around 40\% extra regret compared to T24f throughout;
L24r has 10\% extra regret compared to T24r at $n=2048$. 
ERM-monotone's, MLE's, GS's and Robbin's regrets are 2.36, 14.82, 14.66, and 69.40 even at sequence length 2048. 
}
\label{fig:seqlen_all}
\end{figure}

\subsection{Inference time analysis}

\begin{wrapfigure}{r}{0.4\linewidth}
\vskip 0.2in
\includegraphics[width=\linewidth]{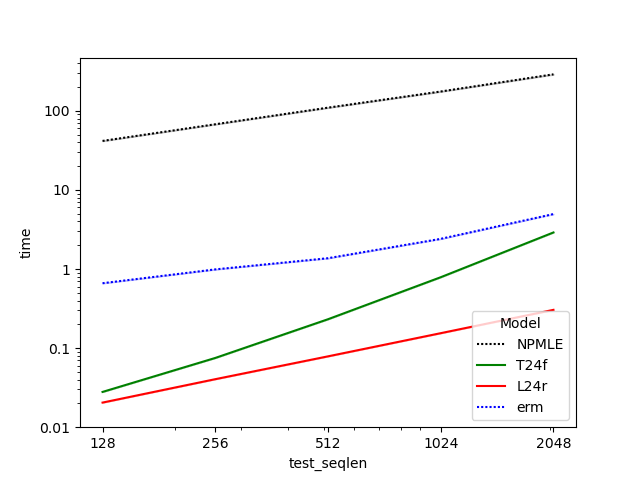}
    \caption{Average time (in seconds) per batch vs sequence length, 
    showing that the inference time of T24r is comparable with that of ERM monotone, 
    and 100x faster than NPMLE. 
    Also shown is L24r which scales better at $n=2048$.
    }
    \label{fig:time_seqlen}
\vskip -0.2in
\end{wrapfigure}

The inference time of T24r in terms of sequence length $n$, number of layers $L$, and embedding dimension is $O(Ln^2d)$. 
In contrast, the time complexity of NPMLE (the most competitive classical algorithm) is difficult to obtain: the algorithms were either based on expectation-maximization (which does not have provable guarantees of convergence) or convex optimization 
(where the rigorous study is also in its infancy \cite{polyanskiy2025nonparametric}). 

To compare T24r against classical EB algorithms, we run our estimators over 4096 neural prior-on-priors, 
where for each prior we consider the time needed to estimate the hidden parameter of 192 batches and sequence lengths $n=$128, 256, 512, 1024, and 2048. 
Each program (transformer and classical baselines) is given 2 Nvidia Volta V100 GPUs and 40 CPUs for computation (340 GB total RAM). 
The results are shown at \prettyref{fig:time_seqlen}, 
where we see that T24r is faster than ERM-monotone on GPU and 100x faster than NPMLE. 
We attribute this to that T24r can better leverage the compute power of GPUs compared to classical algorithms. 
For interest, we will also attach the running time of L24r, which is linear in complexity and runs 10x faster than T24r when $n=2048$. 

\section{Real data experiments}\label{sec:real}

In this section, we answer the following question: 
Can our transformers that are pre-trained on synthetic data perform well on real datasets without re-training on any part of the real datasets, and without any prompt in the form of input-label pairs?

To do so, we consider the following experimental setup: 
Given an integer-valued attribute, let $X$ be the count of the attribute in the initial section we observe, and $Y$ be the count of a similar attribute in the remaining section that we should predict. 
We assume that given a horizon length (duration, sentence length, etc) 
$n_X$ and $n_Y$ of the two sections, there exist hidden parameters $\theta_i$ 
such that $X_i\sim \text{Poi}(n_X\theta_i)$ and $Y_i\sim \text{Poi}(n_Y\theta_i)$, independently 
(for convenience we will scale $\theta_i$ such that $n_X = 1$). 
Our goal is to predict $\hat{Y} = n_Y\hat{\theta}(X)$ using empirical Bayes methods. 
We will focus on the following two types of datasets: 
sports and word frequency. These are some of the main focuses of EB literature 
\cite[Tables 1.1, 11.6]{efron2012large}. 
In the following, we describe the types of datasets that we would study. 
Throughout this section, we name $(X, Y)$ as the input and label sets, respectively. 

We will compare against the classical EB methods. While deep-learning Bayesian models like 
Generative Bayesian Computation \cite{polson2024generative}, TabPFN \cite{hollmann2025accurate}, 
Simformer \cite{gloeckler2024all}, Amortized Conditioning Engine \cite{chang2024amortized} appear to solve empirical Bayes, this is not the case, which we will explain more thoroughly in \prettyref{app:bayesian}. 

\subsection{Sports datasets}\label{sec:sports}
Here, $X$ and $Y$ are the numbers of goals scored by a player within disjoint and consecutive timeframes, 
and $\theta$ represents the innate ability of the given player. We will consider two datasets: 
National Hockey League (NHL) and Major League Baseball (MLB). 

\textbf{NHL dataset}. 
We proceed in the same spirit as \cite[Section 5.2]{jana2022optimal}, 
and study the data on the total number of goals scored by each player in the National Hockey League for 29 years: 
from the 1989-1990 season to the 2018-2019 season (2004-2005 season was canceled). 
The data is obtained from \cite{HockeyReference}, 
and we focus on the skaters' statistics. 
For each season $j$ we predict the goals scored by a player in season $j + 1$ based on their score in season $j$.
(thus the input and label sets are the number of goals a player scored in consecutive seasons, and $n_Y=1$). 
We study the prediction results when fitting all players at once, 
as well as fitting only positions of interest (defender, center, and winger).  

\textbf{MLB dataset}. 
The dataset is publicly available at \cite{Retrosheet}, 
and can be processed by \cite{EstiniRetrosheet}. 
Here, we study the hitting count of each player in batting and pitching players from 1990 to 2017. 
Unlike the between-season prediction as we did for the NHL dataset, we do in-season prediction. 
That is, we take $X$ as the number of goals scored by a player in the beginning portion of the season, 
and $Y$ in the rest of the season. 
For both batting and pitching players (which we fit separately), 
we use $X$ as the goals in the first half of the season (i.e. $n_Y = 1$). 

\subsection{Word frequency datasets}\label{sec:word-freq}
In this setting, we model the alphabet of tokens as $M$ categorical objects $A = \{A_1, \cdots, A_M\}$. 
Given $n$ samples from these objects, we denote $(X_1, \cdots, X_M)$ the frequency of the samples. 
We are to estimate the frequencies $(Y_1, \cdots, Y_M)$ of an unseen section of length $t$ (here $t$ known). 
We model as follows: 
consider $p_1, \cdots, p_M$ as the ``inherent'' probability distribution over $M$ 
(or proportion in a population), so $\sum_{i = 1}^M p_i = 1$. 
Now the frequency $X_i \sim \text{Binom}(n, p_i)$, which we may instead approximate as $X_i\sim\text{Poi}(np_i)$. 
Thus we may use empirical Bayes method to estimate $\hat{\theta}_i = n\hat{p}_i$ based on the frequencies $X_1, \cdots, X_M$, 
and then predict $\hat{Y}_i = \frac{t}{n}\hat{\theta}_i$. 

\textbf{BookCorpusOpen}. 
BookCorpus is a well-known large-scale text dataset, 
originally collected and analyzed by \cite{zhu2015aligning}. 
Here, we use a newer version named BookCorpusOpen, hosted on websites like 
\cite{BookCorpusOpen}. 
This version of the dataset contains 17868 books in English; 
we discard 6 of the books that are too short ($\le 2000$ tokens), 
and 5 other books where NPMLE incurs out-of-memory error. 
To curate the dataset, we first tokenize the text using scikit-learn's 
CountVectorizer with English stopwords removed. 
For each book, the input set comprises the beginning section containing approximately 2000 tokens, 
while the label set the remainder of the book. 
Then for each word, $X$ and $Y$ are the frequency of each word within the input and label set, respectively. 
We will then use the prediction $\hat{Y} = n_Y\cdot \hat{\theta}(X)$ where $n_Y$ is the ratio of the number of sentences in the label set to that of the input set. 

\subsection{Evaluation methods}
We will use the RMSE of each dataset item, normalized by $n_Y$, as our main evaluation metric. 
Specifically, for each dataset, we compute the RMSE incurred by each estimator. 
We then compare them using the following guidelines. 

\textbf{Comparison against MLE}. 
We consider the ratio of RMSE of each estimator against that of the MLE, and ask, ``how much improvement did we achieve against the MLE'' by looking at the \emph{average} of the ratio. 

\textbf{Relative ranking}. We use the Plackett-Luce \cite{plackett1975analysis,luce1959individual} ranking system to determine how well one estimator ranks over the other. 

\textbf{Significance of improvement}. We consider whether one improvement is \emph{significant} by performing paired $t$-test on the RMSE of transformers against the baselines and report the $p$-value. 

We will only show percentage improvement over MLE in \prettyref{tab:percentage_rmse}; 
$t$-tests and Plackett-Luce ranking are deferred to \prettyref{app:real}.  
We also show violin plots in \prettyref{fig:violinplots_realdatasets} to supplement our findings 
(with Robbins removed due to its wide variance). 
A more detailed comparison, including the use of the MAE metric, is shown in \prettyref{app:real}. 
These experiments seem to support that transformers, including the much faster L24r, can outperform the classical EB estimators in real-life settings. 

\begin{figure}
    \begin{subfigure}[b]{0.24\linewidth}
        \centering
        \includegraphics[width=\linewidth]{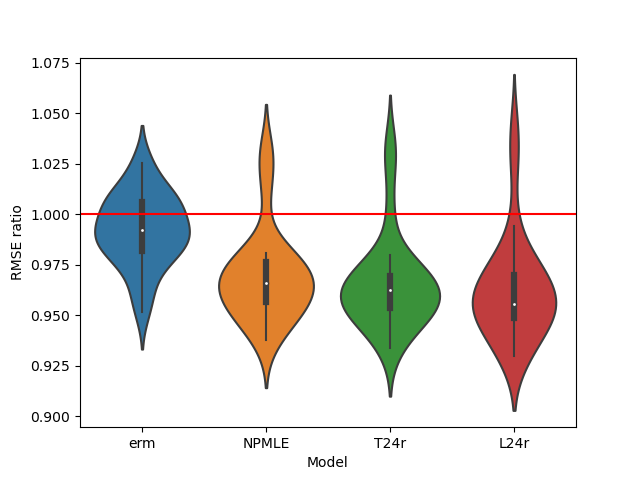}
        \caption{NHL Hockey}
        \label{fig:hockey_violin}
    \end{subfigure}
    \begin{subfigure}[b]{0.24\linewidth}
        \centering
        \includegraphics[width=\linewidth]{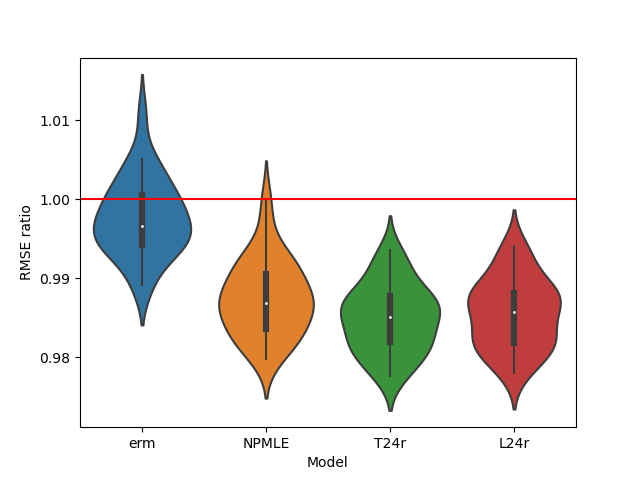}
    \caption{MLB (Batting)}
    \label{fig:batting_violin}
    \end{subfigure}
    \begin{subfigure}[b]{0.24\linewidth}
        \centering
        \includegraphics[width=\linewidth]{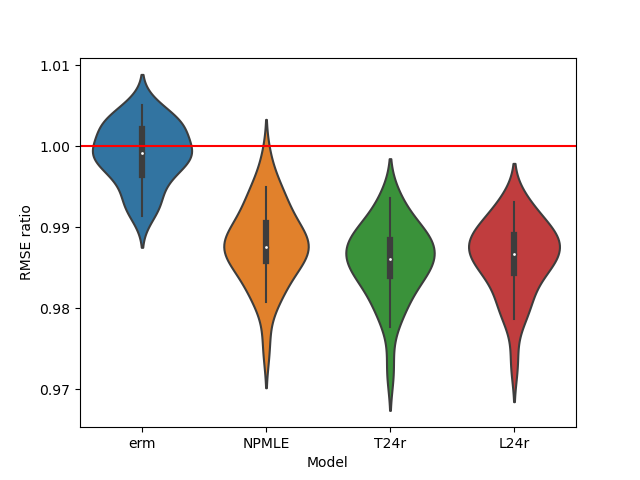}
    \caption{MLB (Batting)}
    \label{fig:pitching_violin}
    \end{subfigure}
    \begin{subfigure}[b]{0.24\linewidth}
        \centering
        \includegraphics[width=\linewidth]{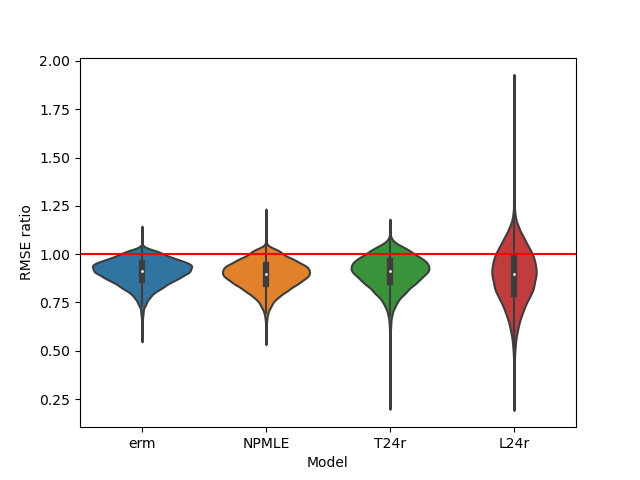}
    \caption{BookCorpusOpen}
    \label{fig:bookcorpus_violin}
    \end{subfigure}
\caption{Violin plots of RMSE ratio of ERM-monotone (blue), NPMLE (orange), T24r (green), and L24r (red) over MLE over various datasets. The horizontal line at 1.0 indicates the threshold of MLE's RMSE. 
Robbins is not shown due to the wide variance. 
Both T24r and L24r show a general improvement over MLE, and in the case of the MLB dataset their advantage over NPMLE is visible. 
}
\label{fig:violinplots_realdatasets}
\end{figure}

\begin{table*}[ht]
\caption{95\% confidence interval of the percentage improvement of RMSE over MLE.}
\label{tab:percentage_rmse}
\vskip 0.15in
\begin{center}
\begin{small}
\begin{sc}
\begin{tabular}{lccccc}
\toprule
Dataset & Robbins & ERM & NPMLE & T24r & L24r \\
\midrule
NHL (all) & -30.55 $\pm$ 6.55 & 1.46 $\pm$ 0.65 & 3.21 $\pm$ 0.92 & 3.46 $\pm$ 0.88 & \textbf{3.96 $\pm$ 1.12} \\
NHL (defender) & -19.54 $\pm$ 6.35 & 3.19 $\pm$ 1.32 & 6.48 $\pm$ 1.63 & 6.91 $\pm$ 1.71 & \textbf{7.54 $\pm$ 2.04} \\
NHL (center) & -49.89 $\pm$ 10.36 & 0.38 $\pm$ 0.82 & 3.44 $\pm$ 0.94 & 4.02 $\pm$ 0.99 & \textbf{4.32 $\pm$ 1.22} \\
NHL (winger) & -42.63 $\pm$ 7.58 & 0.76 $\pm$ 0.69 & 3.06 $\pm$ 0.87 & 3.44 $\pm$ 0.89 & \textbf{3.76 $\pm$ 0.99} \\
\midrule
MLB (batting) & -59.66 $\pm$ 5.88 & 0.23 $\pm$ 0.18 & 1.25 $\pm$ 0.18 & \textbf{1.50 $\pm$ 0.16} & 1.44 $\pm$ 0.17 \\
MLB (pitching) & -40.81 $\pm$ 3.16 & 0.09 $ \pm$ 0.14 & 1.21 $ \pm$ 0.19 & \textbf{1.42 $\pm$ 0.18} & 1.38 $ \pm$ 0.17 \\
\midrule
BookCorpusOpen  & -4.58 $\pm$ 0.43 & 9.38 $\pm$ 0.10 & 10.82 $\pm$ 0.11 & 9.43 $\pm$ 0.12 & \textbf{11.23 $\pm$ 0.21} \\
\bottomrule
\end{tabular}
\end{sc}
\end{small}
\end{center}
\vskip -0.1in
\end{table*}

\section{Conclusion, limitation and future work}\label{sec:conclusion}
We have demonstrated the ability of transformers to learn EB-Poisson via in-context learning. 
This was done by evaluating pre-trained transformers on synthetic data of unseen distribution and sequence lengths, 
and compared against baselines like the NPMLE and other EB estimators. 
In this process, we showed that transformers can achieve decreasing regret as the sequence length increases. 
On the real datasets, we showed that these pre-trained transformers can outperform classical baselines in most cases. 

\textbf{Limitation}. 
We humbly remark that we limited our focus to the 1-dimensional Poisson problem. Therefore, one future direction will be to extend our work to multi-dimensional input 
\cite[Section 1.3]{jana2023empirical}, \cite[Section 6]{jana2022optimal}, 
and to other models like the normal-means model \cite{jiang2009general}. 
We focused on the mean estimation problem, while a full scale posterior inference involves tasks like posterior sampling and uncertainty quantification. 
We believe that (probably larger) transformers could also achieve such tasks in the empirical Bayes setting. 
All these extensions, we believe, will expand the set of real-world applications, 
and can better demonstrate the power (or limitation) of transformers. 
While we established tools (Robbins and NPMLE approximation, and linear probe) to understand how transformers solve mean estimation in EB-Poisson setting, there are other aspects we have not discussed (e.g. training dynamics). 
These will be subject to future work.

  \section*{Acknowledgements} 
  This work was supported in part by the MIT-IBM Watson AI Lab and the National Science Foundation under Grant No CCF-2131115. Anzo Teh was supported by a fellowship from the Eric and Wendy Schmidt Center at the Broad Institute. The authors acknowledge the MIT SuperCloud and Lincoln Laboratory Supercomputing Center for providing compute resources that have contributed to the research results reported within this paper.

\bibliographystyle{alpha} 
\bibliography{references}

\newpage

\appendix
\section{Detailed Discussion on Setups}

\subsection{Worst-case prior and Gold-Standard Estimator}\label{app:worstprior}
We first define the worst-case prior and the gold-standard estimator. 
\begin{definition}[Worst-case prior]
    Let $A$ be a compact subset of $\mathbb{R}$. 
    Then the worst-case prior $\pi_{!, A}$ is defined as
    \[
    \pi_{!, A} = \argmax_{\pi\in \mathcal{A}} \mathsf{mmse}(\pi)
    \]
\end{definition}

A sample distribution of the worst-case prior on $[0, 50]$ is illustrated in 
Figure 1 of \cite{jana2022optimal}. 

One motivation for using the worst-case prior is that the Bayes estimator is considered the ``gold standard'' which minimizes the maximum possible MSE across all priors supported on $A$. 
A concrete statement can be found in the following lemma. 
\begin{lemma}\label{lmm:worst_prior_mse}
    Let $\hat{\theta}_{\pi}$ be the Bayes estimator of a given prior $\pi$, and let $A$ be any compact subset of the reals. 
    Then the least favorable prior $\pi_{!, A}$ of $A$ satisfies the following: 
    \[
    \mathsf{MSE}_{\delta_{\theta}}(\hat{\theta}_{\pi_{!, A}}) \le \mathsf{mmse}(\pi_{!, A}), \forall \theta\in A
    \]
    and equality holds whenever $\theta\in\mathsf{Supp}(\pi_{!, A})$. 
\end{lemma}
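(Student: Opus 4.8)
The plan is to run the classical least-favorable-prior / equalizer-rule argument, exploiting that the integrated risk is affine in the prior while $\mmse(\cdot)$ is its pointwise infimum over estimators, so $\pi_{!,A}$ is a maximizer of a concave functional over the convex set $\mathcal{P}(A)$. For an estimator $\hat g$ write $R(\hat g,\pi)=\int_A \mathsf{MSE}_{\delta_\theta}(\hat g)\,d\pi(\theta)$; then $R(\hat g,\cdot)$ is affine, $\mmse(\pi)=\inf_{\hat g}R(\hat g,\pi)=R(\hat\theta_\pi,\pi)$, and in particular $\int_A \mathsf{MSE}_{\delta_\theta}(\hat\theta_{\pi_{!,A}})\,d\pi_{!,A}(\theta)=\mmse(\pi_{!,A})$. (Existence of the maximizer $\pi_{!,A}$, already presupposed by the Definition, follows from weak-$*$ compactness of $\mathcal{P}(A)$ and upper semicontinuity of $\mmse$.)

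First I would fix $\theta\in A$ and perturb along the segment $\pi_t=(1-t)\pi_{!,A}+t\,\delta_\theta\in\mathcal{P}(A)$, $t\in(0,1]$. Using the Bayes estimator of $\pi_t$, the affine decomposition of $R$, and the suboptimality of $\hat\theta_{\pi_t}$ for the prior $\pi_{!,A}$,
\[
\mmse(\pi_t)=(1-t)\,R(\hat\theta_{\pi_t},\pi_{!,A})+t\,\mathsf{MSE}_{\delta_\theta}(\hat\theta_{\pi_t})\ \ge\ (1-t)\,\mmse(\pi_{!,A})+t\,\mathsf{MSE}_{\delta_\theta}(\hat\theta_{\pi_t}).
\]
On the other hand, optimality of $\pi_{!,A}$ over $\mathcal{P}(A)$ gives $\mmse(\pi_t)\le \mmse(\pi_{!,A})$. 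Combining the two and cancelling the common term yields $\mathsf{MSE}_{\delta_\theta}(\hat\theta_{\pi_t})\le \mmse(\pi_{!,A})$ for every $t\in(0,1]$, and letting $t\downarrow 0$ gives the first claim — provided $\mathsf{MSE}_{\delta_\theta}(\hat\theta_{\pi_t})\to \mathsf{MSE}_{\delta_\theta}(\hat\theta_{\pi_{!,A}})$.

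To justify that limit I would use the explicit Poisson formula $\hat\theta_\pi(x)=(x+1)f_\pi(x+1)/f_\pi(x)$: here $f_{\pi_t}(x)=(1-t)f_{\pi_{!,A}}(x)+t\,e^{-\theta}\theta^x/x!$, so $\hat\theta_{\pi_t}(x)\to \hat\theta_{\pi_{!,A}}(x)$ for each $x$ as $t\downarrow0$, using $f_{\pi_{!,A}}(x)>0$ for all $x\ge 0$ (the only exception being the trivial case $\pi_{!,A}=\delta_0$, i.e.\ $A=\{0\}$). Since both priors are supported on $A$, $0\le\hat\theta_{\pi_t}(x)\le\sup A$ uniformly in $t$ and $x$, so dominated convergence in the series $\mathsf{MSE}_{\delta_\theta}(\hat g)=\sum_{x\ge0}e^{-\theta}\frac{\theta^x}{x!}(\hat g(x)-\theta)^2$ delivers the convergence; the same uniform bound shows $\theta\mapsto \mathsf{MSE}_{\delta_\theta}(\hat\theta_{\pi_{!,A}})$ is continuous (indeed real-analytic) on $A$. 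Finally, for equality on the support, set $h(\theta):=\mmse(\pi_{!,A})-\mathsf{MSE}_{\delta_\theta}(\hat\theta_{\pi_{!,A}})\ge0$ on $A$; then $\int_A h\,d\pi_{!,A}=\mmse(\pi_{!,A})-\mmse(\pi_{!,A})=0$, so the nonnegative continuous function $h$ vanishes on $\mathsf{Supp}(\pi_{!,A})$ (were $h(\theta_0)>0$ at some $\theta_0$ in the support, $h$ would exceed a positive constant on a neighborhood of positive $\pi_{!,A}$-mass, contradicting the zero integral). The step I expect to be the only real obstacle is this continuity/limit control of $\mathsf{MSE}_{\delta_\theta}(\hat\theta_{\pi_t})$ as the prior is pushed toward a point mass, together with dispatching the degenerate $f_{\pi_{!,A}}(x)=0$ possibility; the rest is bookkeeping with the affine structure of the Bayes risk.
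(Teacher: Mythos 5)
Your argument is correct, and it reaches the conclusion by a genuinely different route than the paper. The paper also perturbs along $\pi_\epsilon=(1-\epsilon)\pi_{!,A}+\epsilon\delta_{\theta_0}$, but then \emph{differentiates} $\mmse(\pi_\epsilon)$ at $\epsilon=0$ using the closed form $\mmse(\pi)=\E[\theta^2]-\sum_x e_\pi(x)^2/m_\pi(x)$, and identifies the derivative algebraically as $\mathsf{MSE}_{\delta_{\theta_0}}(\hat\theta_{\pi_{!,A}})-\mmse(\pi_{!,A})$; nonpositivity of the derivative at a maximizer then gives the inequality, and the equality case on the support is simply asserted. You instead avoid differentiation entirely: the two-sided sandwich $(1-t)\mmse(\pi_{!,A})+t\,\mathsf{MSE}_{\delta_\theta}(\hat\theta_{\pi_t})\le\mmse(\pi_t)\le\mmse(\pi_{!,A})$ gives the bound at every $t>0$, and you pass to the limit $t\downarrow 0$ using the explicit Poisson posterior-mean formula, uniform boundedness of Bayes estimators by $\sup A$, and dominated convergence (correctly dispatching the $f_{\pi_{!,A}}(x)=0$ degeneracy, which occurs only for $A=\{0\}$). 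What your approach buys is (i) no need to justify the term-by-term differentiation under the infinite sum that the paper performs implicitly, and (ii) a complete proof of the equality claim via $\int_A h\,d\pi_{!,A}=0$ for the nonnegative continuous excess $h$, which is a genuine improvement over the paper's bare assertion. What it costs is the continuity-in-$t$ lemma for $\mathsf{MSE}_{\delta_\theta}(\hat\theta_{\pi_t})$, which you handle adequately. Both proofs are sound; yours is arguably the more careful of the two.
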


This leads to the following corollary. 
\begin{corollary}\label{cor:gs-minmaxmmse}
    For any compact subset $A$ of the real numbers, we have 
    \[
    \min_{\hat{\theta}}\max_{\pi\in\mathcal{P}(A)} \mathbb{E}[(\hat{\theta}(X) - \theta)^2]
    =\mathsf{mmse}(\pi_{!, A})
    \]
    achieved by the Bayes estimator $\hat{\theta}_{\pi_{!, A}}$ of the least favourable prior, $\pi_{!, A}$. 
\end{corollary}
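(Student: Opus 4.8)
The plan is to prove the minimax identity by establishing the two inequalities separately, with \prettyref{lmm:worst_prior_mse} supplying the only substantive input and simultaneously identifying $\hat{\theta}_{\pi_{!, A}}$ as the minimizer. For the ``$\le$'' direction I would bound the worst-case risk of the concrete estimator $\hat{\theta}_{\pi_{!, A}}$: for an arbitrary prior $\pi\in\mathcal{P}(A)$, writing the integrated risk as an average over $\theta$ of the pointwise MSE,
\[
\mathbb{E}_{\theta\sim\pi,\,X\sim\Poi(\theta)}\bigl[(\hat{\theta}_{\pi_{!, A}}(X)-\theta)^2\bigr]
= \mathbb{E}_{\theta\sim\pi}\bigl[\mathsf{MSE}_{\delta_{\theta}}(\hat{\theta}_{\pi_{!, A}})\bigr]
\le \sup_{\theta\in A}\mathsf{MSE}_{\delta_{\theta}}(\hat{\theta}_{\pi_{!, A}})
\le \mmse(\pi_{!, A}),
\]
where the last step is exactly \prettyref{lmm:worst_prior_mse}. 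Taking the supremum over $\pi\in\mathcal{P}(A)$ and then the infimum over all estimators yields $\min_{\hat{\theta}}\max_{\pi\in\mathcal{P}(A)}\mathbb{E}[(\hat{\theta}(X)-\theta)^2]\le\mmse(\pi_{!, A})$.

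For the ``$\ge$'' direction I would fix an arbitrary estimator $\hat{\theta}$ and simply restrict the maximization to the single prior $\pi_{!, A}\in\mathcal{P}(A)$:
\[
\max_{\pi\in\mathcal{P}(A)}\mathbb{E}\bigl[(\hat{\theta}(X)-\theta)^2\bigr]
\ge \mathbb{E}_{\theta\sim\pi_{!, A}}\bigl[(\hat{\theta}(X)-\theta)^2\bigr]
\ge \mmse(\pi_{!, A}),
\]
the last inequality being the definition of the Bayes risk as the smallest posterior-averaged MSE achievable, which is attained by the posterior mean $\hat{\theta}_{\pi_{!, A}}$. Since this holds for every $\hat{\theta}$, the infimum over estimators is also $\ge\mmse(\pi_{!, A})$. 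Combining the two bounds gives the claimed equality, and the two displays together exhibit $\hat{\theta}_{\pi_{!, A}}$ as achieving the minimum.

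The computation is essentially routine, so there is no real obstacle; the one point that deserves care is the middle inequality of the first display. I would emphasize that it suffices to use the elementary averaging bound $\mathbb{E}_{\theta\sim\pi}[g(\theta)]\le\sup_{\theta\in A}g(\theta)$, valid for \emph{every} $\pi$, combined with \prettyref{lmm:worst_prior_mse}'s pointwise control of $\mathsf{MSE}_{\delta_{\theta}}(\hat{\theta}_{\pi_{!, A}})$ on all of $A$; thus no weak-compactness or continuity argument about the attainment of the supremum is actually needed. Existence of the least favorable prior $\pi_{!, A}$ is likewise not an obstacle, since it is built into its definition and already invoked by \prettyref{lmm:worst_prior_mse}.
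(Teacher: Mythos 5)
Your proposal is correct and follows essentially the same route as the paper's proof: the upper bound comes from averaging the pointwise bound of \prettyref{lmm:worst_prior_mse} over any $\pi\in\mathcal{P}(A)$ and plugging in $\hat{\theta}_{\pi_{!, A}}$, and the lower bound comes from restricting to the single prior $\pi_{!, A}$ and invoking the definition of Bayes risk. The only difference is that you spell out the averaging step more explicitly than the paper does, which is a minor presentational point.
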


\begin{proof}
From \prettyref{lmm:worst_prior_mse}, we have $\mathsf{MSE}_{\pi}(\hat{\theta}_{\pi_{!, A}})\le \mathsf{mmse}(\pi_!)$ for any $\pi\in\mathcal{P}(A)$. 
Therefore $\min_{\hat{\theta}}\max_{\pi\in\mathcal{P}(A)} \mathbb{E}[(\hat{\theta}(X) - \theta)^2]\le\mathsf{mmse}(\pi_!)$ by taking 
$\hat{\theta} = \hat{\theta}_{\pi_{!, A}}$. 
Now, for any $\hat{\theta}$, we have $\mathbb{E}_{\pi_{!, A}}[(\hat{\theta}(X) - \theta)^2]\ge \mathsf{mmse}(\pi_!)$. 
Therefore the conclusion follows. 
\end{proof}

\prettyref{cor:gs-minmaxmmse} says that among all the non-empirical Bayes estimators, $\hat{\theta}_{\pi_{!, A}}$ achieves the best minimax MSE loss among all priors supported on $A$. 
On the flip side, however, this estimator $\hat{\theta}_{\pi_{!, A}}$ does not leverage the fact the low-MMSE nature of some prior, leading to suboptimal regret produced by $\hat{\theta}_{\pi_{!, A}}$. 
In \prettyref{fig:mmse_neural}, we display the priors generated by the neural prior-on-priors 
and the histogram of MMSEs. 
The MSE given by $\hat{\theta}_{\pi_{!, [0, 50]}}$ on priors that are point masses as per 
\prettyref{fig:worst_prior_bayes_mse} suggests that $f_{\pi!, [0, 50]}$ is incapable of achieving low regrets on priors with low MMSEs. 

\begin{figure}[htbp]
    \begin{subfigure}[b]{0.5\linewidth}
        \centering 
        \includegraphics[width=\linewidth]{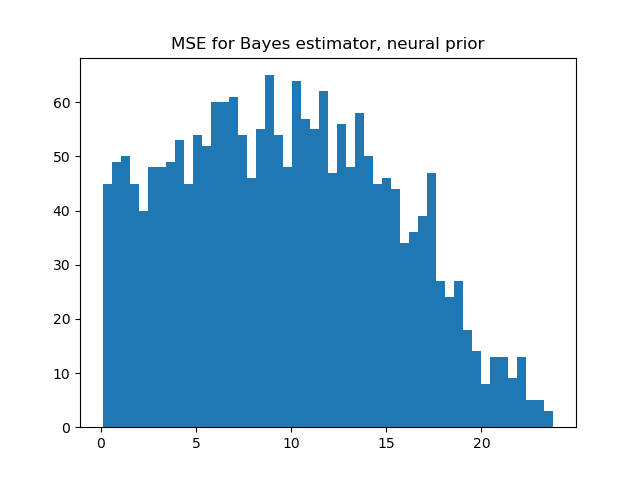}
        \caption{MMSE of neural priors in $\mathcal{P}([0, 50])$}
        \label{fig:mmse_neural}
    \end{subfigure}
    \begin{subfigure}[b]{0.5\linewidth}
        \centering 
        \includegraphics[width=\linewidth]{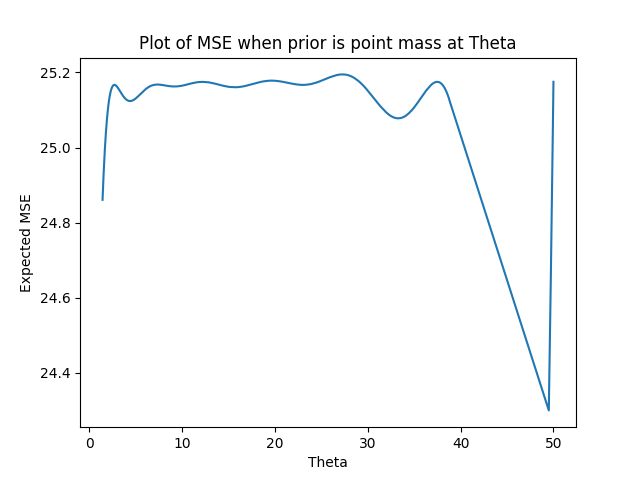}
        \caption{MSE of $f_{\pi!, [0, 50]}$ at point masses}
        \label{fig:worst_prior_bayes_mse}
    \end{subfigure}     
    \caption{Discussion on Worst Prior}
\end{figure}

\subsection{Training priors}\label{app:train-priors}
We now offer a more detailed description of the training priors. 
Recall in \prettyref{sec:training} we defined $\theta = \theta_{\mathsf{base}}\cdot \theta_{\max}$ with $\theta_{\mathsf{base}}\in [0, 1]$. Here, we explain how $\theta_{\mathsf{base}}$ is generated according to both the neural-generated prior-on-priors, 
and the dirichlet process based on the uniform distribution. 
Recall that $\theta_{\max}$ is randomized at training phase according to how we described in \prettyref{sec:training}, 
but nevertheless fixed during evaluation phase. 

\textbf{Neural-generated: prior-on-priors.} 
We sample $\theta_{\mathsf{base}}\in [0, 1]$ via the following: first, let $\mathcal{M}$ be classes of priors determined by some two-layer perceptron with a non-linear activation in-between. 
This is concretely defined as: 
\begin{equation*}
    \mathcal{M} = \{\pi: \pi = \varphi^{W_1, W_2, \sigma}_{\sharp} \mathsf{Unif}[0, 1]\}
\end{equation*}
where $\varphi^{W_1, W_2, \sigma}(x) = \mathsf{Sigmoid}(10W_2\sigma(W_1x))$, $W_1, W_2$ are linear operators, and $\sigma$ is an activation function chosen randomly from 
\[\{GELU, ReLU, SELU, CELU, SiLU, Tanh, TanhShrink\}. \]
$\theta_{\mathsf{base}}$ is then produced by sampling from a mixture of 4 priors in $\mathcal{M}$. 

\textbf{Dirichlet process}. 
Let the base distribution be defined as $H_0\triangleq\text{Unif}([0, 1])$. 
Within each batch, elements $\theta_{\mathsf{base}, 1}, \cdots, \theta_{\mathsf{base}, n}$ are generated as follows: 
\[\theta_{\mathsf{base}, j} = 
    \begin{cases}
        \theta_{\mathsf{base}, i} & \text{w.p. }\frac{j - 1}{\alpha + j - 1},\forall i = 1, \cdots, j - 1\\
        \theta_{\mathsf{base}}\sim H_0 & \text{w.p. }\frac{\alpha}{\alpha + j - 1}
    \end{cases}
\]
where $\alpha$ is a parameter that denotes how `close' we are to i.i.d. generation 
($\alpha=\infty$ essentially means we have iid). 
We use $\alpha = 50$ throughout (recall that the sequence length is 512). 
Note that Dirichlet process implies that our data is not generated i.i.d. for each batch, 
so the Bayes estimator has to be estimated differently. We omit the calculation of this Bayes estimator. 

\textbf{Mixture of two priors.} 
We choose $\theta_{\mathsf{base}}$ from each of the two classes of prior with probability $\frac 12$, 
but each batch contains $\theta$'s drawn only from the same prior. 
This allows us to train our transformers to learn information about the prior $\pi$ that generates $\theta$ upon seeing sequence $(X_1, X_2, \cdots, X_n)$. 

\subsection{Why do we train using a mixture of two prior classes?}
We consider the hypothesis: that our transformer trained under the mixture of the two priors is robust when evaluated under each of the priors. 
This can be verified via the following two tests: 
when evaluated on neural prior, is the performance (in terms of MSE) of the mixture-trained transformers closer to that of neural-trained ones as compared to the Dirichlet-trained ones? 
Similarly, when evaluated on Dirichlet prior, 
is the performance (in terms of MSE) of the mixture-trained transformers closer to that of Dirichlet-trained ones as compared to the neural-trained ones? 
By compring the MSEs of 4096 seeds at evaluation stage at \prettyref{tab:mixture-training}, 
we answer both these questions in the positive 
(the difference is especially obvious when evaluated on neural prior). 

\begin{table}[ht]
\centering
\caption{Table of regret difference; $A - B$ denotes the difference of regret of transformers trained on $A$ vs trained on $B$}
\label{tab:mixture-training}
\begin{tabular}{ccccc}
\hline
 & \multicolumn{2}{c}{Evaluated on Neural}  &  \multicolumn{2}{c}{Evaluated on Dirichlet} \\ 
\hline 
\hline 
\# lyr & mix $-$ neu & dir $-$ mix & mix $-$ dir & neu $-$ mix\\
\hline 
12 & 0.0038 & 0.8645 & 0.0184 & 0.0379\\
18 & 0.0133 & 1.0647 & 0.0173 & 0.0469\\
24 & 0.0082 & 1.0021 & 0.0202 & 0.0388\\
\hline 
\end{tabular}
\end{table}

\section{Technical Proofs}\label{app:proofs}
\subsection{Approximation of known empirical Bayes baselines}
\begin{proof}[Proof of \prettyref{thm:robbins-transformers}]
    \textbf{Encoding step.}
    We embed our inputs representation $X\in\mathbb{R}^n$ into one-hot vector $Y\in\mathbb{R}^{n\times (d + 1)}$ such that 
    $Y_i = e_{X_i + 1}$ if $X_i = 0, 1, \cdots, d$, and 0 otherwise. 
    Then given sample size $n$, $Y \in\mathbb{R}^{(d + 1)\times n}$. 
    Now recall the following attention layer definition in 
    (1) of \cite{vaswani2017attention}: 
    \[
    \text{Attention}(Q, K, V) = \text{softmax}\left(\frac{QK^T}{\sqrt{d_k}}\right) V
    \]
    where $Q=YW_Q, K = YW_K, V = YW_V$, and $W_Q, W_K\in\mathbb{R}^{(d + 1)\times (d+ 1)}$. 
    Let $Z = \text{Attention}(Q, K, V)$. 
    We now design an encoding mechanism such that the representation after skip connection has the following: 
    \[
    (Y + Z)_{ij} = 
    \begin{cases}
        1 + \frac{N(X_i)}{N(X_i) + (X_i + 1)N(X_i + 1)} & j = X_i + 1\le d\\
        \frac{(X_i + 1)N(X_i + 1)}{N(X_i) + (X_i + 1)N(X_i + 1)} & j = X_i + 2\le d\\
        1 & j = X_i + 1 = d\\
        0 & \text{otherwise}.\\
    \end{cases}
    \]

    Define $D$ be a large number, 
    $W_Q = I_{d + 1}$ the $d$-dimensional identity matrix, 
    $W_V = \begin{pmatrix}
        I_d & 0\\ 
        0 & 0\\
    \end{pmatrix}$
    and $W_K\in\mathbb{R}^{(d + 1)\times (d + 1)}$ satisfying
    \[
    (W_K)_{i, j} = 
    \begin{cases}
        D & i = j\\
        D + \sqrt{d + 1}\log i & j = i + 1\\
        0 & \text{otherwise}\\
    \end{cases}
    \] (thus $d_k = d + 1$). Then 
    \[
    (QK^T)_{i, j} = 
    \begin{cases}
        D & X_i = X_j \le d\\
        D + \sqrt{k}\log(X_i + 1) & X_j = X_i + 1\le d\\
        0 & \text{otherwise}\\
    \end{cases}. 
    \]
    Thus we have the following structure for $M \triangleq \text{Softmax}(S)$: 
    $\text{row}_i(M) = \frac 1n$ if $X_i\ge d + 1$, 
    otherwise \[M_{ij} = 
    \begin{cases}
        \frac{1}{N(X_i) + (X_i + 1)N(X_i + 1)} & X_i = X_j\le d - 1\\
        \frac{X_i + 1}{N(X_i) + (X_i + 1)N(X_i + 1)} & X_j = X_i + 1\le d\\
        0 & \text{otherwise}.\\
    \end{cases}\]
    Now given that $V = 
    \begin{pmatrix}\text{Col}_1(Y) & \cdots & \text{Col}_d(Y) & 0\end{pmatrix}$, $Z_{ij} = \sum_{k: j = X_k + 1} Z_{ik}$ for all $k\le d - 1$ (and 0 for $k$)
    This means: 
    \[Z_{ij} = 
    \begin{cases}
        \frac{N(X_i)}{N(X_i) + (X_i + 1)N(X_i + 1)} & j = X_i + 1\le d - 1\\
        \frac{(X_i + 1)N(X_i + 1)}{N(X_i) + (X_i + 1)N(X_i + 1)} & j = X_i + 2\le d\\
        0 & \text{otherwise}.\\
    \end{cases}\]
    Thus, adding back $Y$ gives the desired output. 

    \textbf{Decoding step.}
    We define $Y_1 = \mathsf{ReLU}(Y + Z- 1)$, i.e. a linear operation (with bias) followed by the ReLU nonlinear operator. Notice that $Z$ has entries all in $[0, 1]$, 
    so $Y_1$ acts like $Y* Z$ (i.e. $Z$ masked with $Y$). 
    Let $Z' \in \mathbb{R}^n$ to be the row-wise sum of $Y_1$, 
    i.e. 
    $Z'_i = \frac{N(X_i)}{N(X_i) + (X_i + 1)N(X_i + 1)}$ if $X_i\le d - 1$ and 0 otherwise. 
    Then we consider the following decoding function $f: [0, 1]\to [0, \theta_{\max}]$ by: 
    \[
    f(x) = 
    \begin{cases}
        \frac{1}{x} - 1 & x\ge \frac{1}{1 + M}\\
        M & \text{otherwise}\\
    \end{cases}. 
    \]
    Then $f$ is continuous, and $f(Z')$ is indeed $\hat{\theta}_{\mathsf{Rob}, d, M}$. 
    In addition, this $f$ is $(1+M)^2$-Lipchitz, 
    and therefore by the results in \cite[Theorem 2.1]{mhaskar1996neural}, 
    there exists a two-layered network with $O((1+M)^2\epsilon^{-1})$ hidden neurons that approximates $f$ within $[0, 1]$ with error at most $\epsilon$, as desired. 
\end{proof}

\begin{remark}
We note that Robbins approximation using linear attention is also possible, using the attention defined in \cite[(4)]{katharopoulos2020transformers}. 
Specifically, 
We embed our inputs representation $X\in\mathbb{R}^n$ into one-hot vector $Y\in\mathbb{R}^{n\times (d + 1)}$
where $Y_i = e_{X_i+1}$ if $X_i < d$ and $e_{d+1}$ if $X_i\ge d$ (instead of 0 to prevent the denominator in \cite[(4)]{katharopoulos2020transformers} for normalization to be 0). 
We choose $\phi$ the identity function,
    $W_Q = I_{d + 1}$ the $d + 1$-dimensional identity matrix, 
    $W_V = \begin{pmatrix}
        I_{d} & 0\\ 
        0 & 0\\
    \end{pmatrix}$, and $W_K$ the following tridiagonal matrix: 
    \[
    (W_K)_{ij} = 
    \begin{cases}
        1 & i = j\\
        j & j = 1\le i - 1\le d - 1\\
        0 & \text{otherwise}\\
    \end{cases}
    \]
    This results in the following: 
    \[
    Q_i = 
    \begin{cases}
        e_{X_i + 1} & X_i < d\\
        e_{d+1} & \text{otherwise}\\
    \end{cases}
    \qquad 
    K_j = 
    \begin{cases}
        e_{X_j + 1} + X_je_{X_j} & X_j < d\\
        e_{d+1} & \text{otherwise}\\
    \end{cases}
    \qquad 
    V_j = 
    \begin{cases}
        e_{X_j + 1} & X_j < d\\
        0 & \text{otherwise}\\
    \end{cases}
    \]
    and will therefore yield the same attention output after normalization. 
\end{remark}

Before proving \prettyref{thm:univ_npmle}, we need to establish the continuity of the clipped NPMLE, 
with arguments the sigmoid of the input integer and empirical distribution. 
\begin{lemma}\label{lmm:npmle_cont}
    Let $\varphi: \mathbb{R}_{\ge 0} \to [0, 1]$ be a strictly increasing and continuous function, 
    and $\text{Sig} = \{\varphi(z): z\in \mathbb{Z}\}$. 
    Let $S =  \sup(\text{Sig})$ and $\text{Sig}^+ = Sig \cup \{S\}$. 
    Denote $\tilde{\theta}: (\mathcal{P}(\text{Sig}^+)\times \text{Sig}^+\to [0, M]$ be such that for each $p^{\mathsf{emp}}\in \mathcal{P}(\mathbb{Z}_{\ge 0})$ and 
    $x\in \mathbb{Z}_{\ge 0}$, 
    the function $\tilde{\theta}(\varphi_{\sharp}(p^{\mathsf{emp}}), \varphi(x)) = \hat{\theta}_{\mathsf{NPMLE}, d, M}(p^{\mathsf{emp}}, x)$. 
    Then $\tilde{\theta}$ can be extended into a function that is continuous in both arguments. 
    (Here $p^{\mathsf{emp}}$ acts like an empirical distribution). 
\end{lemma}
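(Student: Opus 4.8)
The plan is to exhibit the NPMLE estimator as a composition of two maps: the map $p \mapsto \hat\pi_{\mathsf{NPMLE}}(p)$ from an (arbitrary, not necessarily empirical) distribution $p$ on the observation space to a maximizer in $\mathcal{P}([0,M])$ of the log-likelihood functional $L(Q;p) = \sum_{x} p(x)\log f_Q(x)$, followed by the map $Q \mapsto \hat\theta_Q(\cdot)$ taking a prior to its Bayes estimator via \prettyref{eq:poisson-bayes}. I will work throughout with the pushforward coordinates: since $\varphi$ is strictly increasing and continuous, it is a homeomorphism onto its image, so $\varphi_\sharp$ is a homeomorphism $\mathcal{P}(\mathbb{Z}_{\ge0}^+) \to \mathcal{P}(\mathrm{Sig}^+)$ (with the weak topology), and working in either coordinate system is equivalent for continuity purposes; the point of the $\varphi$-coordinates is only that $\mathrm{Sig}^+$ is compact, so $\mathcal{P}(\mathrm{Sig}^+)$ is compact metrizable, which is what lets us extend continuously from the dense-ish set of (images of) empirical distributions to all of $\mathcal{P}(\mathrm{Sig}^+)$.

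First I would record that the clipped/truncated version only depends on $x$ through finitely many values $x \le d$, and for those the map $x \mapsto \varphi(x)$ is an injection, so the $x$-argument lives in a finite discrete set and continuity in $x$ is vacuous; the real content is joint continuity in the distribution argument, and it suffices to prove: the map $p \mapsto \hat\theta_{\mathsf{NPMLE},d,M}(p,x)$ (for each fixed $x\le d$) extends continuously to all of $\mathcal{P}(\mathrm{Sig}^+)$. Second, I would address non-uniqueness of the NPMLE: the maximizer $\hat\pi_{\mathsf{NPMLE}}(p)$ need not be unique as a measure, but the induced marginal $f_{\hat\pi}$ is unique (standard: $L(\cdot;p)$ is concave in $f_Q$, which is linear in $Q$, so the set of optimal marginals is a singleton), and since the Bayes estimator \prettyref{eq:poisson-bayes} depends on $\pi$ only through $f_\pi$, the map $p \mapsto \hat\theta_{\hat\pi(p)}$ is genuinely well-defined as a function. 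Third — the main step — I would prove continuity via a compactness / closed-graph argument: take $p_k \to p$ in $\mathcal{P}(\mathrm{Sig}^+)$, let $Q_k = \hat\pi_{\mathsf{NPMLE}}(p_k) \in \mathcal{P}([0,M])$; by compactness of $\mathcal{P}([0,M])$ pass to a subsequence with $Q_k \to Q_\infty$ weakly; show $f_{Q_k}(x) \to f_{Q_\infty}(x)$ for each $x$ (the Poisson kernel $e^{-\theta}\theta^x/x!$ is bounded and continuous on $[0,M]$, so this is just weak convergence); show $L(Q_k;p_k) \to L(Q_\infty;p)$ and, by an upper-semicontinuity argument against any competitor $Q'$, that $Q_\infty$ is itself an NPMLE for $p$ — hence $f_{Q_\infty} = f_{\hat\pi(p)}$ by the uniqueness of the optimal marginal; finally conclude $f_{Q_k}(x) \to f_{\hat\pi(p)}(x)$ along the full sequence (any subsequential limit gives the same marginal), and then $\hat\theta_{Q_k}(x) = (x+1) f_{Q_k}(x+1)/f_{Q_k}(x) \to (x+1) f_{\hat\pi(p)}(x+1)/f_{\hat\pi(p)}(x)$, after which min-with-$M$ clipping and the truncation-at-$d$ preserve continuity.

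The main obstacle is the one subtlety hiding in that argument: controlling the ratio $f_Q(x+1)/f_Q(x)$ near the boundary, i.e. ensuring the denominator $f_{\hat\pi(p)}(x)$ stays bounded away from $0$ so that the ratio map is continuous there. I would handle this by noting that $f_Q(0) = \mathbb{E}_Q[e^{-\theta}] \ge e^{-M} > 0$ uniformly, and more generally $f_Q(x) \ge c(x,M) > 0$ whenever $Q$ is not degenerate at a point making $f_Q(x)=0$ — but in fact $f_Q(x) = \mathbb{E}_Q[e^{-\theta}\theta^x/x!]$ can only vanish if $Q = \delta_0$ and $x \ge 1$; the NPMLE chooses $Q$ to maximize likelihood, so $f_{\hat\pi(p)}(x) = 0$ forces $p(x) = 0$ as well, and for such $(p,x)$ the clipped estimator is defined to be $M$ anyway (or the value is irrelevant), so one restricts attention to the (closed) set where $f_{\hat\pi(p)}(x) > 0$ and checks the extension is consistent on the boundary. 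A secondary, more bookkeeping-type point is verifying that $L(Q;p)$, possibly $-\infty$ when $p$ charges an $x$ with $f_Q(x)=0$, still behaves well under the limit — this is where working with truncation at $d$ and the clipping to $[0,M]$ earns its keep, since it confines everything to finitely many coordinates and a compact prior space. Once these boundary cases are dispatched, the extension $\tilde\theta$ is the continuous function furnished by the closed-graph argument on the compact metrizable space $\mathcal{P}(\mathrm{Sig}^+) \times \mathrm{Sig}^+$.
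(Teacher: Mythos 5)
Your overall architecture matches the paper's: restrict to the distribution argument (the $x$-argument is handled by the clipping at $d$ and the constant value $M$ near $S$), use compactness of $\mathcal{P}([0,M])$ to extract weakly convergent subsequences of NPMLE priors, show any subsequential limit is itself an NPMLE for the limiting distribution, and invoke uniqueness of the optimal mixture density to get convergence along the full sequence. The paper phrases this as a minimization of KL divergence and runs a contradiction argument, but the ingredients are the same.

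There is, however, one genuine gap: your justification for uniqueness of the optimal marginal (``$L(\cdot;p)$ is concave in $f_Q$, which is linear in $Q$, so the set of optimal marginals is a singleton'') only pins down $f_Q(x)$ at points $x$ with $p(x)>0$, since the objective $\sum_x p(x)\log f_Q(x)$ is blind to the value of $f_Q$ off the support of $p$. But the Bayes estimator at $x_0$ requires $f_{\hat\pi}(x_0)$ and $f_{\hat\pi}(x_0+1)$, and in the limit $p$ may assign zero mass to $x_0$ even though every $p_k$ charges it. This off-support case is precisely where the paper's proof spends most of its effort: it splits on whether $p_0(x_0)>0$, and in the case $p_0(x_0)=0$ it must invoke uniqueness of the NPMLE prior itself (Theorem 1 of \cite{jana2022optimal}, a nontrivial result, not a consequence of concavity alone) together with a separate contradiction argument using closedness of the constrained set $\{Q: |f_Q(x_0)-f_{Q_0}(x_0)|>\epsilon\}$. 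Without this, distinct subsequential limits $Q_\infty$ could all be optimal yet disagree at $x_0$, and your closed-graph argument would not determine the limit of $\hat\theta_{Q_k}(x_0)$. A secondary imprecision: the truncation at $d$ clips only the \emph{output} of the estimator, not the NPMLE objective, which still sums over the full (possibly unbounded) support of $p$; so the $-\infty$/tail issues in $L(Q;p)$, and the possibility of mass escaping to $S$ under weak limits, are not automatically ``confined to finitely many coordinates'' as you suggest.
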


\begin{proof}[Proof of \prettyref{thm:univ_npmle}]
    Starting with the input tokens $(X_1, \cdots, X_n)$, 
    we consider the token-wise embedding 
    $Y_i = \mathsf{Sigmoid}(X_i)$. 
    Note that $\mathsf{Sigmoid}$ satisfies the assumption of $\varphi$ in \prettyref{lmm:npmle_cont}. 
    Denote $p_n^{\mathsf{emp}}$ as the empirical distribution determined by $(X_1, \cdots, X_n)$. 
    By \prettyref{lmm:npmle_cont}, 
    the function 
    $\hat{\theta}_{\mathsf{NPMLE}, d}(p_n^{\mathsf{emp}}, \cdot)$ can be continuously extended 
    (in the (weak$^*$, $\ell_2$) metric). Then \cite{furuya2024transformers}, Theorem 1 says that there exists a transformers network $\Gamma$ that satisfies 
    \[
    |\hat{\theta}(x_1, \cdots, x_n)_i - \Gamma(x_1, \cdots, x_n)_i|\le \epsilon\,,
    \]
    as desired. 
\end{proof}

\begin{proof}[Proof of \prettyref{lmm:npmle_cont}]
    To establish continuity, it suffices to show that given a sequence of distributions
    $p^{\mathsf{emp}}_1, p^{\mathsf{emp}}_2, \cdots$ and integers $x_1, x_2, \cdots$ such that 
    $\varphi_{\sharp}(p^{\mathsf{emp}}_n)\to \phi_0$ and $\varphi(x_n)\to y_0$ 
    in (weak$^*$, $\ell_2$) metric, we have 
    $\hat{\theta}_{\mathsf{NPMLE}}(p^{\mathsf{emp}}_n, x_n) \to  \tilde{\theta}(\phi_0, y_0)$. 
    Note that $x_n$ are nonnegative integers, 
    so given that $\varphi$ is increasing and injective, 
    either $x_n$ is eventually constant (in which case $x_n\to x_0$ for some $x_0$), or $x_n\to \infty$ 
    (in which case $y_0 = S \triangleq\sup(\text{Sig})$). 
    Note first that in the case $y_0 = S$ we have 
    $\tilde{\theta}(\varphi_{\sharp}(p^{\mathsf{emp}}_n), \varphi(x_n)) = M$ for all $n$ sufficiently large. 
    Note also that $p^{\mathsf{emp}}$ is a distribution on nonnegative integers so 
    $\varphi_{\sharp}(p^{\mathsf{emp}}_n)(S) = 0$, 
    which then follows that $\phi_0(S) = 0$ too. 
    Thus $\phi_0\in \mathcal{P}(\text{Sig})$ and so there exists $p_0$ such that 
    $\phi_0 = \varphi_{\sharp}(p_0)$. 

    It now remains to consider the case where $x_n = x_0$ for all sufficiently large $n$; 
    w.l.o.g. we may even assume $x_n = x_0$ for all $n$. 
    If $x_0 > d$ we are done since $\hat{\theta}_{\mathsf{NPMLE}}(p^{\mathsf{emp}}, x_0) = M$ for all $\pi$.  
    Assume now that $x_0 \le d$. 
    Recall that $\hat{\theta}_{\mathsf{NPMLE}}(p^{\mathsf{emp}}, x_0) = (x_0 + 1) \frac{f_{\hat{\pi}}(x_0 + 1)}{f_{\hat{\pi}}(x_0)}$ where $\hat{\pi}$ is the prior estimated by NPMLE. 
    Thus denoting $\hat{\pi}_n$ as NPMLE prior of $p^{\mathsf{emp}}_n$, for each $x$ it suffices to show that convergence of $f_{\hat{\pi}_n}(x)$.  
    Now note that NPMLE also has the following equivalent form: 
    $\hat{\pi}_n
    = \argmin_Q D(p^{\mathsf{emp}}_n || f_Q)$, 
    where $D$ denotes the KL divergence. 
    Note that $D$ can be written in the following form 
    (c.f. 
    Assumption 1 of \cite{jana2022optimal}). 
    \begin{equation}\label{eq:mindist_eqn}
    D(\pi_1 || \pi_2) = t(\pi_1)
    + \sum_{x\ge 0} \ell(\pi_1, \pi_2)
    \end{equation}
    Notice that $\ell(a, b) := a\log \frac{1}{b}$ fulfills 
    $b\to \ell(a, b)$ is strictly decreasing and strictly convex for $a > 0$. 
    Fix $x_0\le d + 1$, we now have two subcases: 
    
    \textbf{Case $p_0(x_0) > 0$.}
    The claim immediately follows from that $\ell(p_0(x_0), b)$ is \emph{strictly} convex in $b$, 
        and that for each $x$ we have $p^{\mathsf{emp}}_n(x) \to p_0(x)$ following the weak convergence of $p^{\mathsf{emp}}_n$. 

    \textbf{Case $p_0(x_0) = 0$.} 
    Let $Q_0\in \argmin_Q t(p_0) + \sum_{x\ge 0} \ell(p_0(x), f_Q(x))$. 
    By Theorem 1 of \cite{jana2022optimal}, 
    this $Q_0$ is unique. 
    Now suppose that there is a subsequence 
    $n_1, n_2, \cdots$ and a real number $\epsilon > 0$ such that 
    \[
    |f_{\hat{\pi}_{n_i}}(x_0) - f_{Q_0}(x_0)| > \epsilon
    \]
    By the previous subcase, we have 
    $f_{\hat{\pi}_{n_i}}(x)\to f_{Q_0}(x)$ for all $x\in \text{Supp}(p_0)$. 
    We now consider $Q_1$ as the solution to 
    \prettyref{eq:mindist_eqn}, but among the class of functions satisfying the constraint 
    $|f_{Q_1}(x) - f_{Q_0}(x)| > \epsilon$. 
    Such a constrained space is closed by proof of 
    Theorem 1 in \cite{jana2022optimal}, 
    so there exists $\delta > 0$ such that 
    \begin{align*}
        ~D(p_0 || f_{\hat{\pi}_{n_i}}) - D(p_0 || f_{Q_0})
        \ge D(p_0 || f_{Q_1}) - D(p_0 || f_{Q_0}) \ge \delta
    \end{align*}
    On the other hand, by fixing $Q, Q'$, we have 
    \begin{align*}
    (D(p_n^{\mathsf{emp}} || f_{Q}) - D(p_n^{\mathsf{emp}} || f_{Q'}))
    -  (D(p_0 || f_{Q}) - D(p_0 || f_{Q'}))
    \to 0
    \end{align*}
    given that $p_n^{\text{emp}}\to p_0$ weakly 
    and that 
    $D(p || f_{Q}) - D(p || f_{Q'})= \sum_y p(y)\log \frac{f_Q'}{f_Q}$, 
    which is a contradiction. 
\end{proof}

\begin{proof}[Proof of \prettyref{cor:vanish-regret}]
    Choose $d$ such that $\mathbb{P}[X > d] < \frac{\epsilon}{6\cdot \theta_{\max}^2}$. 
    By the properties of the Poisson distribution (e.g. \cite[(15.19)]{polyanskiy2025information} on Poisson-Binomial distribution), 
    for each $x > \theta_{\max}$ we have 
    \[
    \mathbb{P}[X\ge x] \le \sup_{0\le \theta\le\theta_{\max}}\frac{(e\theta)^x e^{-\theta}}{x^x}
         \le \frac{(e\theta_{\max})^x e^{-\theta_{\max}}}{x^x}
    \]
    Thus there exists some $d\in O(\log (\frac{1}{\epsilon}) + \log \theta_{\max})$ that fulfills this condition. 

    Next, with sequence length $n$, both the Robbins estimator \cite[Theorem 2]{polyanskiy2021sharp} and NPMLE \cite[Theorem 3]{jana2022optimal},enjoy a minimax regret of $O_{\theta_{\max}}(\frac 1n (\frac{\log n}{\log\log n})^2)$.
    This means we can choose $N = O(\frac{1}{\epsilon}\log^2 \frac{1}{\epsilon})$ such that for all $n\ge N$, 
    the minimax regret over the class $\mathcal{P}([0, \theta_{\max}])$ is bounded by $\frac{\epsilon}{6}$. 
    Now, by the previous two theorems, for each $M$, there exists a transformer model $\Gamma$ that can approximate either Robbins or NPMLE (clipped at $M$) up to $\sqrt{\frac{\epsilon}{6}}$ precision uniformly for inputs up to $d$. 
    Given that $\theta\le \theta_{\max}$, we can choose the threshold $M = \theta_{\max}$ in both cases without increasing the regret. 
    Then we have 
    \begin{flalign*}
        \text{Regret}(\Gamma)
        &\le 2(\text{Regret}(\hat{\theta}) + \EE[(\hat{\theta} - \Gamma)^2])
        \nonumber\\
        &\le 2\left(\frac{\epsilon}{6} + \EE[(\hat{\theta}(X) - \Gamma(X))^2\indc{X \le d}]  + \EE[\theta_{\max}^2\indc{X > d}]\right)
        \nonumber\\
        &\le 2\left(\frac{\epsilon}{6} + \frac{\epsilon}{6} + \frac{\epsilon}{6}\right)
        \nonumber\\
        &=\epsilon
    \end{flalign*}

    Finally, to bound the model complexity of a transformer needed for such approximation, 
    we follow the Robbins approximation as per \prettyref{thm:robbins-transformers}. 
    Here, we used an encoder with embedding dimension of $O(\log(\epsilon^{-1}))$, 
    and decoder an multi-layer perceptrons with $O((1+\theta_{\max})^2\epsilon^{-1})$ layers and width 2. 
    This gives a total of $O((1+\theta_{\max})^2\epsilon^{-1} + \log^2(\epsilon^{-1})) = O_{\theta_{\max}}(\epsilon^{-1})$ parameters.  
\end{proof}

\subsection{Identities on worst-case prior}
\begin{proof}[Proof of \prettyref{lmm:worst_prior_mse}]
    We consider the prior $\pi_{\epsilon}\triangleq (1-\epsilon)\pi_! + \epsilon \delta_{\theta_0}$ for some $\theta_0\in A$. 
    Then $\frac{\partial }{\partial \epsilon}\mathsf{mmse}(\pi_{\epsilon})|_{\epsilon = 0} \le 0$ with equality if 
    $\theta_0\in\text{supp}(\pi_{!, A})$. 
    Consider, now, the following form: 
    \begin{flalign*}
    \mathsf{mmse}(\pi) &= \mathbb{E}[\theta^2] - \mathbb{E}_X[\mathbb{E}[\theta | X]^2]
    = \mathbb{E}[\theta^2] - \mathbb{E}_X[\mathbb{E}[\theta | X]^2]
    \\&= \mathbb{E}[\theta^2] - \sum_x \frac{e_{\pi}(x)^2}{m_{\pi}(x)}
    \end{flalign*}
    where $m_{\pi}(x) = \int p(x|\theta)d\pi(\theta)$ and $e_{\pi}(x) = \int \theta p(x|\theta)d\pi(\theta)$ 
    are the probability mass function and posterior mean of $x$, respectively. 

    Now denote $e_{\theta_0}(x) = \theta_0p(x|\theta_0)$ and $m_{\theta_0}(x) = p(x|\theta_0)$.
    Denote also the difference 
    $d(x) \triangleq m_{\theta_0}(x) - m_{\pi_!}(x)$ and 
    $k(x) \triangleq e_{\theta_0}(x) - e_{\pi_!}(x)$. 
    Then 
    \begin{align*}
        & ~\mathsf{mmse}(\pi_{\epsilon})
    \\= & ~\mathbb{E}_{\pi_!}[\theta^2] + \epsilon(\theta_0^2 - \mathbb{E}_{\pi_!}[\theta^2]) 
    - \sum_x \frac{(e_{\pi_!}(x) + \epsilon k(x))^2}{m_{\pi_!}(x) + \epsilon d(x)}
    \end{align*}
    which means the derivative when evaluated at 0: 
    \begin{flalign*}
        0\ge &~\frac{\partial }{\partial \epsilon}\mathsf{mmse}(\pi_{\epsilon})|_{\epsilon = 0}
        \\
        =& ~\theta_0^2 - \mathbb{E}_{\pi_!}[\theta^2] 
        - \sum_x \frac{2m_{\pi_!}(x)e_{\pi_!}(x)k(x) - e_{\pi_!}(x)^2d(x)}{m_{\pi_!}(x)^2}
        \\
        =&~\theta_0^2 - \sum_x \frac{2m_{\pi_!}(x)e_{\pi_!}(x)e_{\theta_0}(x) - e_{\pi_!}(x)^2m_{\theta_0}(x)}{m_{\pi_!}(x)^2} 
        \\
        &\quad -\mathsf{mmse}(\pi_!)
        \\
        =&~\theta_0^2 - 2\sum_x e_{\theta_0}(x) \frac{e_{\pi_!}(x)}{m_{\pi_!}(x)}
        +\sum_x m_{\theta_0}(x)\left(\frac{e_{\pi_!}(x)}{m_{\pi_!}(x)}\right)^2 \\
        &\quad - \mathsf{mmse}(\pi_!)
        \\
        =&~\mathsf{mse}_{\delta_{\theta}}(f_{\pi!}) - \mathsf{mmse}(\pi_!)
    \end{flalign*}
    where the last equality follows from that $\frac{e_{\pi_!}(x)}{m_{\pi_!}(x)} = f_{\pi_!}(x)$. 
    Therefore, the conclusion follows. 
\end{proof}

\section{Pseudocode on Robbins approximation via transformers}\label{app:transf-robbins}
We present a pseudocode in \prettyref{alg:robbins-transformers} on how a transformer can be set up to approximate Robbins, using a formulation that closely mimics the PyTorch module. 
All vectors and matrices use 1-indexing. 
Note that the attention output is $Z = \text{Softmax}(\frac{YW_QW_K^TY^T}{\sqrt{d_k}})YW_V$. 

\begin{algorithm}[tb]
   \caption{Pseudocode that approximates $\hat{\theta}_{\mathsf{Rob}, d, \theta_{\max}}$ using a transformer.}
   \label{alg:robbins-transformers}
\begin{algorithmic}
   \STATE {\bfseries Input:} Inputs $x_1, \cdots, x_n$, $\theta_{\max}$, $d$. 
   \STATE {\bfseries Define: } $d_k = d + 1$, $n_{head} = 1$. 
   \STATE {\bfseries Define: } $D = \max \{100, d_k^2\}$.  
   \STATE {\bfseries Define: } $W_Q = I_{d_k}$, $W_V = \text{diag}(1, 1, \cdots, 1, 0), W_K$. 
   \FOR {$i=1$ {\bfseries to} $d + 1$}
       \FOR {$j=1$ {\bfseries to} $d + 1$}
           \IF{$i=j$}
           \STATE $W_k[i, j] = D$ 
           \ELSIF{$j=i+1$}
           \STATE $W_k[i, j] = D + \sqrt{d+1}\log i$ 
           \ELSE
           \STATE $W_k[i, j] = 0$
           \ENDIF
       \ENDFOR
   \ENDFOR 
   \STATE {\bfseries Define: } AttnLayer = Attn($W_Q, W_K, W_V$). 
   \STATE {\bfseries Define: } $Z=\text{AttnLayer}(Y, Y, Y)$. 
   \STATE $Z' = \mathsf{ReLU}(Y + Z - 1).$
   \STATE $Z_1 = \text{rowsum}(Z')$. 
   \STATE \textbf{return} $\min \{\frac{1}{Z_1} - 1, M\}$. 
\end{algorithmic}
\end{algorithm}

\section{Estimates on resources needed}\label{app:compute}
We briefly detail an estimate the amount of compute we used in producing this work. 

\textbf{Pre-training stage}. 
On 2 Nvidia Volta V100 GPUs and 40 CPUs, our model T24r takes about 28 hours to train, 
while L24r takes about 3 hours (due to its linear complexity). 
Together with other models trained and evaluated during hyperparameters search, we estimate the whole pretraining process to take somewhere between 1000 to 2000 GPU hours. 

\textbf{Synthetic experiments}. 
Each experiment described in \prettyref{sec:synthetic} is done on 2 Nvidia Volta V100 GPUs and 40 CPUs for computation (340 GB total RAM), 
possibly split between different tasks. 
The time taken ranges from a few minutes (Robbins/MLE) to a few days (NPMLE even with reduced number of batches used per prior). 
For non-NPMLE tasks, one of the most computationally intensive task is T24r on $n=2048$ case, 
which takes around 32 GPU hours. 
We note that it is possible to use only CPU for these tasks, although it will take longer to run. 

\textbf{Real experiments}.
Evaluation of each estimator (including NPMLE) on the sports dataset in \prettyref{sec:real} takes no more than a few minutes, where 2 Intel CPUs (8 GB) would suffice. 
Evaluation of BookCorpusOpen dataset can take a few hours (T24r) to one day (NPMLE). 

\section{Further analysis on synthetic experiments}\label{app:synthetic}
\subsection{Performance}\label{app:syntetic-performance}
We recall that our synthetic experiments are measuring regret w.r.t. sequence length for both the neural and worst-prior. 
In the main section, we show a plot of how the regret decreases with sequence length; 
here, we provide a more comprehensive result on the Plackett-Luce rankings in 
\prettyref{tab:pl-synthetic}, 
along with the $p$-value by pairwise $t$-test of T24r against relevant classical baselines, 
as per \prettyref{tab:tstat-sythetetic-24}. 
From the $p$-value we conclude that the transformers outperform other baselines by a significant margin on various experiments.  
(except in a handful of cases). 

\begin{table*}[ht]
\caption{Plackett-Luce coefficients of estimators' regrets on synthetic experiments. The coefficient of MLE is set to 0 throughout.}
\label{tab:pl-synthetic}
\vskip 0.15in
\begin{center}
\begin{small}
\begin{sc}
\begin{tabular}{lcccccc}
\toprule
Experiments & GS & Robbins & ERM & NPMLE & T24r & L24r \\
\midrule 
Neural-128 & -0.004 & -3.384 & 0.979 & 4.542 & \textbf{7.104} & 7.005\\
Neural-256 & -0.024 & -3.271 & 1.713 & 5.194 & \textbf{7.402} & 7.192\\
Neural-512 & -0.045 & -3.157 & 2.484 & 5.894 & \textbf{7.533} & 7.157\\
Neural-1024 & -0.067 & -3.015 & 3.120 & 6.703 & \textbf{7.525} & 7.082\\
Neural-2048 & -0.092 & -2.834 & 3.461 & \textbf{7.515} & 7.487 & 7.009\\
\midrule
WP-128 & - & -4.900 & -2.429 & 2.475 & 4.942 & \textbf{7.407}\\
WP-256 & - & -2.466 & 2.469 & 4.934 & 7.403 & \textbf{9.874}\\
WP-512 & - & -2.724 & 2.725 & 5.452 & \textbf{8.581} & 8.556\\
WP-1024 & - & -2.472 & 2.468 & 4.937 & \textbf{9.881} & 7.409\\
WP-2048 & - & -2.462 & 2.462 & 4.925 & \textbf{9.858} & 7.393\\
\midrule
Multn-128 & 0.462 & -4.808 & -2.500 & 3.186 & 5.495 & \textbf{7.796}\\
Multn-256 & 0.443 & -4.733 & -2.413 & 3.192 & 5.510 & \textbf{7.827}\\
Multn-512 & 0.502 & -2.646 & 2.230 & 4.847 & 7.301 & \textbf{9.753}\\
Multn-1024 & 0.455 & -2.527 & 3.212 & 5.550 & 7.896 & \textbf{9.955}\\
Multn-2048 & 0.471 & -2.720 & 3.419 & 5.939 & 8.324 & \textbf{9.320}\\
\bottomrule
\end{tabular}
\end{sc}
\end{small}
\end{center}
\vskip -0.1in
\end{table*}

\begin{table*}[ht]
\caption{$\mathbb{P}[\mathsf{Regret}\text{(T24r)} > \mathsf{Regret}\text{(Classical)}]$ obtained via paired $t$-test.}
\label{tab:tstat-sythetetic-24}
\vskip 0.15in
\begin{center}
\begin{small}
\begin{sc}
\begin{tabular}{lccccc}
\toprule
Experiments & MLE & GS & Robbins & ERM & NPMLE \\
\midrule 
Neural-128 & $<$ 1e-100 & $<$1e-100 & $<$ 1e-100 & $<$ 1e-100 & $<$ 1e-100\\
Neural-256 & $<$ 1e-100 & $<$1e-100 & $<$ 1e-100 & $<$ 1e-100 & $<$ 1e-100\\
Neural-512* & $<$ 1e-100 & $<$1e-100 & $<$ 1e-100 & $<$ 1e-100 & $<$ 1e-100\\
Neural-1024 & $<$ 1e-100 & $<$1e-100 & $<$ 1e-100 & $<$ 1e-100 & 0.0252\\
Neural-2048 & $<$ 1e-100 & $<$1e-100 & $<$ 1e-100 & $<$ 1e-100 & $>$ 1 - 1e-100\\
\midrule
WP-128 & $<$ 1e-100 & - & $<$ 1e-100 & $<$ 1e-100 & $<$ 1e-100\\
WP-256 & $<$ 1e-100& - & $<$ 1e-100 & $<$ 1e-100 & $<$ 1e-100\\
WP-512 & $<$ 1e-100 & - & $<$ 1e-100 & $<$ 1e-100 & $<$ 1e-100\\
WP-1024 & $<$ 1e-100 & - & $<$ 1e-100 & $<$ 1e-100 & $<$1e-100\\
WP-2048 & $<$ 1e-100 & - & $<$ 1e-100 & $<$ 1e-100 & $<$1e-100\\
\midrule
Multn-128 & $<$1e-100 & $<$ 1e-100 & $<$ 1e-100 & $<$ 1e-100 & $<$ 1e-100 \\
Multn-256 & $<$1e-100 & $<$ 1e-100 & $<$ 1e-100 & $<$ 1e-100 & $<$ 1e-100 \\
Multn-512 & $<$1e-100 & $<$ 1e-100 & $<$ 1e-100 & $<$ 1e-100 & $<$ 1e-100 \\
Multn-1024 & $<$1e-100 & $<$ 1e-100 & $<$ 1e-100 & $<$ 1e-100 & $<$ 1e-100 \\
Multn-2048 & $<$1e-100 & $<$ 1e-100 & $<$ 1e-100 & $<$ 1e-100 & $<$ 1e-100 \\
\bottomrule
\end{tabular}
\end{sc}
\end{small}
\end{center}
\vskip -0.1in
\end{table*}

\begin{table*}[ht]
\caption{$\mathbb{P}[\mathsf{Regret}\text{(L24r)} > \mathsf{Regret}\text{(Classical)}]$ obtained via paired $t$-test.}
\label{tab:tstat-sythetetic-l24r}
\vskip 0.15in
\begin{center}
\begin{small}
\begin{sc}
\begin{tabular}{lccccc}
\toprule
Experiments & MLE & GS & Robbins & ERM & NPMLE \\
\midrule 
Neural-128 & $<$ 1e-100 & $<$1e-100 & $<$ 1e-100 & $<$ 1e-100 & $<$ 1e-100\\
Neural-256 & $<$ 1e-100 & $<$1e-100 & $<$ 1e-100 & $<$ 1e-100 & $<$ 1e-100\\
Neural-512* & $<$ 1e-100 & $<$1e-100 & $<$ 1e-100 & $<$ 1e-100 & $<$ 1e-100\\
Neural-1024 & $<$ 1e-100 & $<$1e-100 & $<$ 1e-100 & $<$ 1e-100 & 1 - 1.92e-6\\
Neural-2048 & $<$ 1e-100 & $<$1e-100 & $<$ 1e-100 & $<$ 1e-100 & $>$ 1 - 1e-100\\
\midrule
WP-128 & $<$ 1e-100 & - & $<$ 1e-100 & $<$ 1e-100 & $<$ 1e-100\\
WP-256 & $<$ 1e-100& - & $<$ 1e-100 & $<$ 1e-100 & $<$ 1e-100\\
WP-512 & $<$ 1e-100 & - & $<$ 1e-100 & $<$ 1e-100 & $<$ 1e-100\\
WP-1024 & $<$ 1e-100 & - & $<$ 1e-100 & $<$ 1e-100 & $<$1e-100\\
WP-2048 & $<$ 1e-100 & - & $<$ 1e-100 & $<$ 1e-100 & $<$1e-100\\
\midrule
Multn-128 & $<$1e-100 & $<$ 1e-100 & $<$ 1e-100 & $<$ 1e-100 & $<$ 1e-100 \\
Multn-256 & $<$1e-100 & $<$ 1e-100 & $<$ 1e-100 & $<$ 1e-100 & $<$ 1e-100 \\
Multn-512 & $<$1e-100 & $<$ 1e-100 & $<$ 1e-100 & $<$ 1e-100 & $<$ 1e-100 \\
Multn-1024 & $<$1e-100 & $<$ 1e-100 & $<$ 1e-100 & $<$ 1e-100 & $<$ 1e-100 \\
Multn-2048 & $<$1e-100 & $<$ 1e-100 & $<$ 1e-100 & $<$ 1e-100 & $<$ 1e-100 \\
\bottomrule
\end{tabular}
\end{sc}
\end{small}
\end{center}
\vskip -0.1in
\end{table*}

\newpage

\section{Further Analysis on real data experiments}\label{app:real}
\subsection{Comparison with deep-learning Bayesian models}\label{app:bayesian}
We describe why Bayesian models do not solve the real data experiments based on our EB modelling. 
Fundamentally, this can be seen from the following two aspects: unsupervised vs supervised, and adaptability to prior misspecification. 

\textbf{Supervised vs unsupervised}. 
In EB our predictions are unsupervised at test time: we estimate $\theta_1, \cdots, \theta_n$ based only on inputs 
$X_1, \cdots, X_n$, 
with no prompts in the form $(X, \theta)$, and no assumption placed on $\pi$ from which $\theta$ is drawn. 
This is especially evident in our experimental setting in \prettyref{sec:real} where we make predictions $Y_i$'s based only on $X_i$'s. 

On the other hand, in simulation based inference (SBI) one samples many $(\theta, X)$ pairs from the prior $\pi$. 
For example, \cite{zammit2024neural} fixes a prior on $\theta$, generate many pairs $(\theta, X)$, 
and train a neural network to compute posterior $p_{\theta | X}$ by minimizing a suitable entropy loss. 
Note that in EB, such information on $\theta$ is unknown at test time, 
and the prior is itself unknown and has to be estimated from the input sequence $X_1, \cdots, X_n$ alone.

\textbf{Adaptability to prior misspecification}. 
In EB, there is no assumption at test time on the prior from which the labels $\theta$ are drawn. 
In the case of transformers, for example, we evaluate them on priors unseen during the pretraining phase, (i.e. multinomial prior-on-priors and worst-case prior). 
Our pre-trained transformers are willing to estimate the prior at test time and adapt to it to predict $\mathbb{E}[\theta | X]$.
Not only do they outperform the classical estimators: EB (NPMLE, Robbins, ERM-monotone) and non-EB (MLE, gold-standard), 
they also leverage longer sequences to produce better estimates (i.e. length generalization). 
Note that these classical EB algorithms also provably achieve decreasing regret upon longer sequences (\cite[Theorem 2]{polyanskiy2021sharp}, \cite[Theorem 3]{jana2022optimal}, \cite[Theorem 1]{jana2023empirical} regardless of the prior (provided it fulfills certain light-tailedness condition), which is an advantage EB estimators enjoy over the non-EB estimators. 

In contrast, the problem of different prior at training and test time (i.e. model misspecification) presents significant challenges in Bayesian inference, as acknowledged in \cite[Appendix S2.8]{zammit2024neural} and \cite[page 9 before section 3D]{cranmer2020frontier}. 
The discussion of the gold-standard estimators in \prettyref{app:worstprior} illustrates the fundamental limit of non-EB estimators in this respect. 

\textbf{Examples. }
Some recent works of Bayesian inference include TabPFN \cite{hollmann2025accurate}, ACE \cite{chang2024amortized}, Simformer \cite{gloeckler2024all} and Generative Bayesian Computation (GBC) \cite{polson2024generative}. 
These are all supervised in the following sense: 
\begin{itemize}
    \item TabPFN and ACE receive prompts containing `input, label' pairs at test time, 
    as illustrated in \cite[Figure 1(a)]{hollmann2025accurate} and \cite[(1)]{chang2024amortized}; 

    \item Simformer requires simulation for each task (e.g. Appendices A2.2 and A3.2 of \cite{gloeckler2024all} detail the training protocols that include the number of simulations for each task)

    \item GBC uses simulation (e.g. Section 2 of \cite{polson2024generative} mentions the need of generative AI to simulate from all distributions). 
\end{itemize}

Note that in the example of the real dataset, $\theta$ could even be impossible to obtain 
(for example, in the NHL or MLB dataset, $\theta$ represents the innate ability of an athlete and $X$ the number of goals scored). 

We acknowledge that these `input-label' pairs could have been obtained in an experimental setting different than ours. E.g. in the NHL or MLB dataset, prompts in the form of $(X_{:, j - t}, X_{:, j - t + 1})$ representing a player's score in seasons $j - t$ and $j - t + 1$ can be obtained, where $t = 1, 2, \cdots, T$ are lookback windows (thus here the label is not $\theta$ but an independent set of observations). 
These and $X_{:, j}$ can be given to these deep-learning Bayesian models before asking them to predict $X_{:, j + 1}$. 
Note that in our EB setting, $T = 0$, which these frameworks are incapable of. 
Constructing EB estimators to leverage extra information when $T\ge 1$ is beyond the scope of this paper. 

\subsection{More comparisons on the RMSE metric}\label{app:rmse}
In the main section, we on reported the percentage improvement in RMSE compared to MLE. 
We supplement our finding on RMSE by showing a Plackett Luce ELO coefficient of RMSEs of the estimators in \prettyref{tab:elo_rmse}, 
which shows that the transformers are consistently ranked at the top. 
We will also show the results of paired $t$-test on \prettyref{tab:tstat_rmse} in terms of $\mathbb{P}[\text{RMSE(T24r)} > \text{RMSE(baselines)}]$, 
and \prettyref{tab:tstat_rmse_l24r} in terms of $\mathbb{P}[\text{RMSE(L24r)} > \text{RMSE(baselines)}]$, 
which shows that T24r and L24r consistently gives significant improvement over the baselines (except for T24r on BookCorpusOpen). 

\begin{table*}[ht]
\caption{Plackett-Luce coefficients of estimators' RMSE on real datasets. The coefficient of MLE is set to 0 throughout.}
\label{tab:elo_rmse}
\vskip 0.15in
\begin{center}
\begin{small}
\begin{sc}
\begin{tabular}{lccccc}
\toprule
Dataset & Robbins & ERM & NPMLE & T24r & L24r \\
\midrule
NHL (all) & -2.511 & 1.453 & 3.348 & 3.844 & \textbf{4.805}\\
NHL (defender) & -1.730 & 1.617 & 3.511 & 4.222 & \textbf{4.720}\\
NHL (center) & -2.710 & 0.439 & 2.120 & 3.020 & \textbf{3.197}\\
NHL (winger) & -3.308 & 0.654 & 2.236 & 3.221 & \textbf{3.814}\\
\midrule 
MLB (batting) & -3.675 & 0.862 & 3.994 & \textbf{6.995} & 5.257\\
MLB (pitching) & -3.466 & 0.543 & 4.273 & \textbf{7.520} & 6.166 \\
\midrule 
BookCorpusOpen  & -0.018 & 1.523 & \textbf{2.312} & 1.610 & 1.558 \\
\bottomrule
\end{tabular}
\end{sc}
\end{small}
\end{center}
\vskip -0.1in
\end{table*}

\begin{table*}[ht]
\caption{$\mathbb{P}[\text{RMSE(T24r)} > \text{RMSE(baselines)}]$ obtained via paired $t$-test.}
\label{tab:tstat_rmse}
\vskip 0.15in
\begin{center}
\begin{small}
\begin{sc}
\begin{tabular}{lcccc}
\toprule
Dataset & MLE & Robbins & ERM & NPMLE\\
\midrule
NHL (all) & 1.51e-07 & 3.43e-11 & 1.96e-08 & 4.54e-04\\
NHL (defender) & 1.05e-08 & 1.61e-10 & 3.46e-08 & 0.00225\\
NHL (center) & 8.23e-08 & 1.79e-11 & 4.87e-08 & 2.95e-06\\
NHL (winger) & 9.97e-08 & 7.36e-13 & 1.71e-07 & 3.92e-04\\
\midrule
MLB-Bat & 6.57e-17 & 9.89e-19 & 2.42e-11 & 1.20e-06\\
MLB-Pitch & 1.38e-15 & 3.30e-20 & 3.66e-13 & 7.26e-09\\
\midrule
BookCorpusOpen & $<$ 1e-100 & $<$ 1e-100 & 1 - 4.53e-4 & $>$ 1- 1e-100 \\
\bottomrule
\end{tabular}
\end{sc}
\end{small}
\end{center}
\vskip -0.1in
\end{table*}

\begin{table*}[ht]
\caption{$\mathbb{P}[\text{RMSE(L24r)} > \text{RMSE(baselines)}]$ obtained via paired $t$-test.}
\label{tab:tstat_rmse_l24r}
\vskip 0.15in
\begin{center}
\begin{small}
\begin{sc}
\begin{tabular}{lcccc}
\toprule
Dataset & MLE & Robbins & ERM & NPMLE\\
\midrule
NHL (all) & 1.01e-06 & 2.07e-11 & 9.38e-07 & 2.35e-04\\
NHL (defender) & 6.47e-08 & 5.95e-11 & 4.84e-07 & 0.00203\\
NHL (center) & 1.03e-06 & 9.78e-12 & 4.89e-07 & 6.05e-04\\
NHL (winger) & 1.57e-07 & 6.74e-13 & 5.09e-07 & 7.76e-04\\
\midrule
MLB batting & 2.78e-16 & 9.79e-19 & 7.28e-11 & 1.61e-04\\
MLB pitching & 1.66e-15 & 3.31e-20 & 1.15e-12 & 1.75e-06\\
\midrule
BookCorpusOpen & $<$ 1e-100 & $<$ 1e-100 & 8.06e-95 & 1.76e-08\\
\bottomrule
\end{tabular}
\end{sc}
\end{small}
\end{center}
\vskip -0.1in
\end{table*}

\subsection{Evaluation on the MAE metric}\label{app:mae}
We compare the Mean Absolute Error (MAE) of various estimators to supplement the RMSE results.  
Apart from the average percentage improvement of each algorithm over the MLE 
as per \prettyref{tab:dataset_mae}, 
we also display the $p$-values based on paired $t$-test of transformers vs other algorithms in \prettyref{tab:tstat_mae}, 
and \prettyref{tab:elo_mae}. 
In \prettyref{fig:hockey_violin_mae}, \prettyref{fig:batting_violin_mae}, 
\prettyref{fig:pitching_violin_mae} and \prettyref{fig:bookcorpus_violin_mae}, 
we also include the violin plots.

\begin{table*}[ht]
\caption{95\% confidence interval of the percentage improvement of MAE by each algorithm over MLE.}
\label{tab:dataset_mae}
\vskip 0.15in
\begin{center}
\begin{small}
\begin{sc}
\begin{tabular}{lccccc}
\toprule
Dataset & Robbins & ERM & NPMLE & T24r & L24r \\
\midrule
NHL (all) & -20.14 $\pm$ 4.44 & -0.20 $\pm$ 0.60 & 0.88 $\pm$ 0.57 & 0.93 $\pm$ 0.53 & \textbf{1.12 $\pm$ 0.67} \\
NHL (defender) & -13.38 $\pm$ 4.24 & 1.44 $\pm$ 1.16 & 3.26 $\pm$ 1.07  & 3.09 $\pm$ 1.10 & \textbf{3.64 $\pm$ 1.33} \\
NHL (center) & -41.43 $\pm$ 9.21 & -0.65 $\pm$ 0.86 & 2.26 $\pm$ 0.82 & 2.80 $\pm$ 0.80 & \textbf{2.94 $\pm$ 0.96} \\
NHL (winger) & -32.43 $\pm$ 7.23 & -0.12 $\pm$ 0.78 & 1.43 $\pm$ 0.63 & 1.82 $\pm$ 0.62  & \textbf{2.14 $\pm$ 0.67} \\
\midrule
Baseball (batting) & -49.64 $ \pm$ 4.74 & 0.49 $ \pm$ 0.24 & 1.57 $ \pm$ 0.21 & 1.72 $\pm$ 0.20 & \textbf{1.85 $ \pm$ 0.20}\\
Baseball (pitching) & -38.16 $\pm$ 2.87 & 0.08 $\pm$ 0.21 & 1.55 $\pm$ 0.19 & \textbf{1.76 $\pm$ 0.18} & 1.70 $\pm$ 0.17 \\
\midrule
BookCorpusOpen  & 28.05 $\pm$ 0.14 & 29.54 $\pm$ 0.10 & 29.65 $\pm$ 0.10 & 23.17 $\pm$ 0.21 & \textbf{29.72 $\pm$ 0.12} \\
\bottomrule
\end{tabular}
\end{sc}
\end{small}
\end{center}
\vskip -0.1in
\end{table*}

\begin{table*}[ht]
\caption{Plackett-Luce coefficients of estimators' MAE on real datasets. The coefficient of MLE is set to 0 throughout.}
\label{tab:elo_mae}
\vskip 0.15in
\begin{center}
\begin{small}
\begin{sc}
\begin{tabular}{lccccc}
\toprule
Dataset & Robbins & ERM & NPMLE & T24r & L24r \\
\midrule
NHL (all) & -2.849 & 0.078 & 1.844 & 1.680 & \textbf{2.268}\\
NHL (defender) & -2.181 & 0.810 & 2.423 & 2.104 & \textbf{2.679}\\
NHL (center) & -3.139 & -0.401 & 1.556 & 2.546 & \textbf{2.654}\\
NHL (winger) & -2.817 & 0.198 & 1.356 & 2.334 & \textbf{2.842}\\

\midrule
MLB batting & -3.909 & 0.872 & 4.309 & 4.645 & 6.386 \\
MLB pitching & -3.977 & 0.002 & 3.461 & 6.627 & 5.166 \\
\midrule
BookCorpusOpen  & 3.585 & 4.191 & \textbf{4.395} & 2.964 & 4.131 \\
\bottomrule
\end{tabular}
\end{sc}
\end{small}
\end{center}
\vskip -0.1in
\end{table*}

\begin{table*}[ht]
\caption{$\mathbb{P}[\text{MAE(T24r)} > \text{MAE(baselines)}]$ obtained via paired $t$-test.}
\label{tab:tstat_mae}
\vskip 0.15in
\begin{center}
\begin{small}
\begin{sc}
\begin{tabular}{lcccc}
\toprule
Dataset & MLE & Robbins & ERM & NPMLE\\
\midrule
NHL (all) & 0.00124 & 6.31e-10 & 2.83e-06 & 0.195\\
NHL (defender) & 2.35e-06 & 1.36e-09 & 0.000253 & 0.929\\
NHL (center) & 1.35e-07 & 3.1e-10 & 1.12e-08 & 0.000236\\
NHL (winger) & 1.33e-06 & 6.63e-10 & 1.88e-06 & 4.58e-06\\

\midrule
MLB-Bat & 1.63e-16 & 2.39e-19 & 4.60e-10 & 0.00662\\
MLB-Pitch & 4.73e-18 & 4.38e-20 & 1.25e-14 & 1.85e-06\\
\midrule
BookCorpusOpen & $<$ 1e-100 & $>$ 1- 1e-100 & $>$ 1- 1e-100 & $>$ 1- 1e-100\\
\bottomrule
\end{tabular}
\end{sc}
\end{small}
\end{center}
\vskip -0.1in
\end{table*}

\begin{table*}[ht]
\caption{$\mathbb{P}[\text{MAE(L24r)} > \text{MAE(Classical)}]$ obtained via paired $t$-test.}
\label{tab:tstat_mae_l24r}
\vskip 0.15in
\begin{center}
\begin{small}
\begin{sc}
\begin{tabular}{lcccc}
\toprule
Dataset & MLE & Robbins & ERM & NPMLE\\
\midrule
NHL (all) & 0.00189 & 4.42e-10 & 2.04e-06 & 0.0106\\
NHL (defender) & 3.88e-06 & 4.13e-10 & 9.28e-05 & 0.0337\\
NHL (center) & 1.56e-06 & 2.89e-10 & 7.49e-08 & 0.00198\\
NHL (winger) & 3.55e-07 & 5.37e-10 & 3.88e-07 & 1.04e-06\\
\midrule
MLB batting & 3.92e-17 & 2.10e-19 & 8.51e-11 & 5.43e-05\\
MLB pitching & 1.84e-18 & 4.61e-20 & 1.52e-14 & 6.20e-04\\
\midrule
BookCorpusOpen & $<$1e-100 & 1.43e-55 & 2.63e-26 & 2.18e-16\\
\bottomrule
\end{tabular}
\end{sc}
\end{small}
\end{center}
\vskip -0.1in
\end{table*}

\begin{figure}
\begin{subfigure}[b]{0.24\linewidth}
\begin{center}
\centerline{\includegraphics[width=\linewidth]{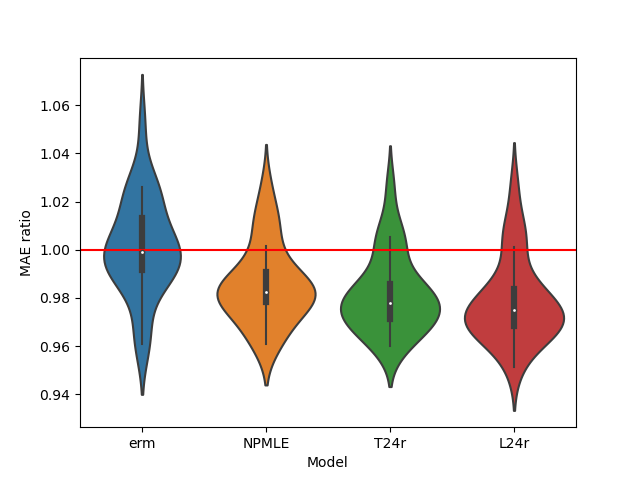}}
    \caption{NHL Hockey}
    \label{fig:hockey_violin_mae}
    \end{center}
\vskip -0.2in
\end{subfigure}
\begin{subfigure}[b]{0.24\linewidth}
\begin{center}
\centerline{\includegraphics[width=\linewidth]{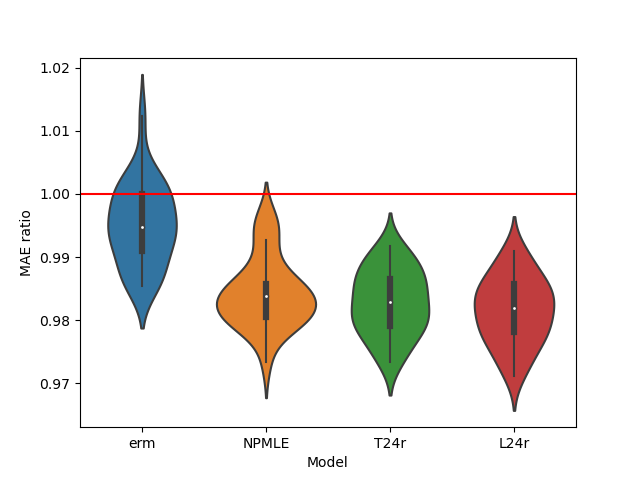}}
    \caption{MLB Batting}
    \label{fig:batting_violin_mae}
    \end{center}
\vskip -0.2in
\end{subfigure}
\begin{subfigure}[b]{0.24\linewidth}
\begin{center}
\centerline{\includegraphics[width=\linewidth]{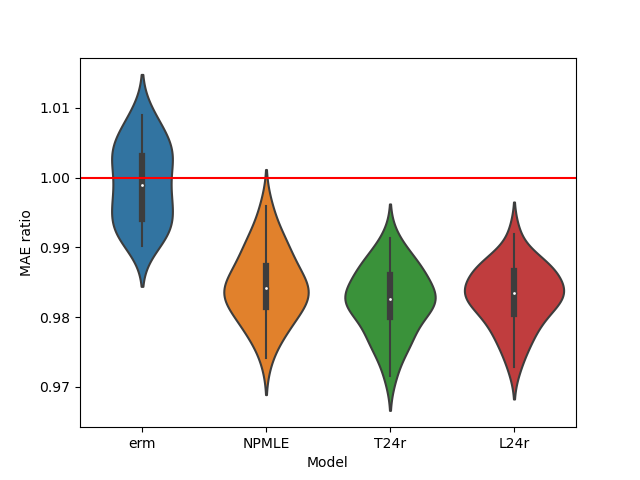}}
    \caption{MLB Pitching}
    \label{fig:pitching_violin_mae}
    \end{center}
\end{subfigure}
\begin{subfigure}[b]{0.24\linewidth}
\begin{center}
\centerline{\includegraphics[width=\linewidth]{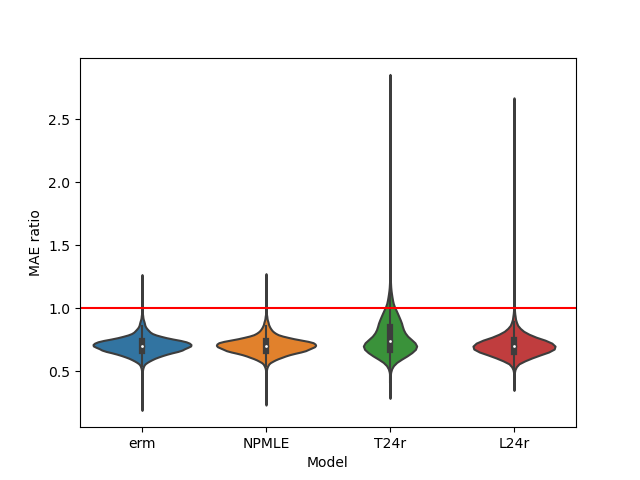}}
    \caption{BookCorpusOpen.}
    \label{fig:bookcorpus_violin_mae}
    \end{center}
\end{subfigure}
\caption{Violin plots of MAE ratio of ERM-monotone (blue), NPMLE (orange), T24r (green) and L24r (red) over MLE over
various datasets. 
Note that T24r and L24r show a general improvement for the MLB datasets, but for bookcorpus they are generally affected by the outliers. 
The horizontal line at 1.0 indicates the threshold of MLE’s RMSE. Robbins is not
shown due to the wide variance.}
\end{figure}

\end{document}